\newtheorem{theorem}{Theorem}
\newtheorem{lemma}{Lemma}[section]
\newtheorem{definition}{Definition}[section]
\newcommand{\bcomment}[1]{}
\newcommand{\TODO}[1]{}
\newcommand{\define}{\triangleq}
\newcommand{\eql}[1]{\overset{(#1)}{=}}
\newcommand{\pth}[1]{\left(#1\right)} 
\newcommand{\croc}[1]{\left[#1\right]} 
\newcommand{\acc}[1]{\left\{#1\right\}} 
\newcommand{\norm}[1]{\left|#1\right|} 
\newcommand{\bpth}[1]{\big(#1\big)}
\newcommand{\Bpth}[1]{\Big(#1\Big)}
\newcommand{\pthMat}[4]{\begin{pmatrix} #1 & #2 \\ #3 & #4\end{pmatrix}}
\newcommand{\crocMat}[4]{\begin{bmatrix} #1 & #2 \\ #3 & #4\end{bmatrix}}
\newcommand{\crocVec}[2]{\begin{bmatrix} #1 \\ #2\end{bmatrix}}
\newcommand{\natS}{\mathbb{N}}
\newcommand{\realS}{\mathbb{R}}
\newcommand{\compS}{\mathbb{C}}
\newcommand{\E}[1]{\mathbb{E}\left[#1\right]}
\newcommand{\ind}[1]{\mathds{1}\left\{#1\right\}}
\newcommand\independent{\protect\mathpalette{\protect\independenT}{\perp}}
\def\independenT#1#2{\mathrel{\rlap{$#1#2$}\mkern2mu{#1#2}}}
\newcommand{\calC}{\mathcal{C}}
\newcommand{\calE}{\mathcal{E}}
\newcommand{\calM}{\mathcal{M}}
\newcommand{\calN}{\mathcal{N}}
\newcommand{\calO}{\mathcal{O}}
\newcommand{\calS}{\mathcal{S}}
\newcommand{\calT}{\mathcal{T}}
\newcommand{\calU}{\mathcal{U}}
\newcommand{\calV}{\mathcal{V}}
\DeclareMathOperator*{\argmax}{argmax}
\newcommand{\ul}[1]{\vec{#1}}
\newcommand{\dul}[1]{\mathbf{#1}}
\newcommand{\diag}{\operatorname{diag}}
\newcommand{\tr}{\operatorname{tr}}
\newcommand{\Var}{\operatorname{Var}}
\newcommand{\eq}{\mathrel{\phantom{=}}}
\newcommand{\UA}{\calU}
\newcommand{\UB}{\calV}
\newcommand{\Siga}{\dul{\Sigma}_{\textrm{a}}}
\newcommand{\Sigb}{\dul{\Sigma}_{\textrm{b}}}
\newcommand{\Sigab}{\dul{\Sigma}_{\textrm{ab}}}
\newcommand{\La}{\dul{L}_{\textrm{a}}}
\newcommand{\Lb}{\dul{L}_{\textrm{b}}}
\newcommand{\Ta}{T_a}
\newcommand{\Tb}{T_b}
\newcommand{\mua}{\ul{\mu}_a}
\newcommand{\mub}{\ul{\mu}_b}
\newcommand{\da}{d_a}
\newcommand{\db}{d_b}
\begin{document}
\title{Database Alignment with Gaussian Features}
\author[1]{Osman Emre Dai\thanks{oedai@gatech.edu}}
\author[2]{Daniel Cullina\thanks{dcullina@princeton.edu}}
\author[1]{Negar Kiyavash\thanks{negar.kiyavash@ece.gatech.edu}}
\affil[1]{Georgia Tech, Department of Industrial \& Systems Engineering}
\affil[2]{Princeton University, Department of Electrical Engineering}
\affil[3]{Georgia Tech, Department of Electrical and Computer Engineering and Department of Industrial \& Systems Engineering}
\date{}
\maketitle

\begin{abstract}
  We consider the problem of aligning a pair of databases with jointly Gaussian features. We consider two algorithms, complete database alignment via MAP estimation among all possible database alignments, and partial alignment via a thresholding approach of log likelihood ratios. We derive conditions on mutual information between feature pairs, identifying the regimes where the algorithms are guaranteed to perform reliably and those where they cannot be expected to succeed.
\end{abstract}

\section{Introduction}

Consider the following setting: There are a large set of entities (e,g, users) with some measurable characteristics. Let the measures of these characteristics be jointly Gaussian, with known statistics. We refer to these measures as features. Consider two different sources, each providing a database with lists of features for these entities. Furthermore, let one these sources lack proper labeling for features that would allow for the identification of feature pairs from the two sources that correspond to the same entity. This might be due to privacy concerns, if the mentioned features provided by the source contain sensitive information that ought to remain anonymous, or it might simply be that a reliable labeling is not available.

If the correlation between features pairs is sufficiently strong, then it is possible to exploit this correlation to identify correspondences between the two databases and in fact generate a perfect alignment between the feature lists. Such a capability might be a valuable tool to recuperate missing information by labeling unlabeled features or by allowing the junction of measurements coming from distinct sources. However it also has serious implications in privacy as it makes anonymous data vulnerable to deanonymization attacks \cite{imdb}.

It then becomes critical to understand the limitations of database alignment and to identify the conditions that characterize these limitations. This allows us to assess the feasibility and reliability of alignment procedures as well as the vulnerability of deanonymization schemes.  In this study we investigate the conditions that guarantee either the achievability of alignment or its infeasibility. We analyze these conditions for both partial alignments and as well as for complete alignments. Cullina et al. have recently analyzed this problem for the case of discrete random variables, introducing a new correlation measure characterizing the feasibility of alignment \cite{DBLP:conf/isit/2018}. Takbiri et al. have investigated a related problem where the feature of each user is Gaussian with characteristic statistics and has correlation with other user features \cite{takbiriGaussian}. In this setting an adversary with perfect knowledge of system statistics attempts to match features with the characteristic user statistics. This follows the authors' previous studies of the same setting for discrete valued features and with data obfuscation \cite{takbiri1},\cite{DBLP:journals/corr/abs-1805-01296}.

The database alignment problem is connected to the widely studied graph alignment problem.
In that setting, each feature is associated with a pair of anonymized users.
In the simplest case, the feature is a Bernoulli random variable indicating the presence or absence of an edge between the users.
A recent line of work has characterized the information theoretic limits of the graph alignment problem \cite{pedarsani_privacy_2011,cullina_exact_2017,cullina_improved_2016}.
The problem of aligning correlated Wigner matrices, in which each feature is a Gaussian random variable, has served as a proxy for understanding the effectiveness of graph alignment algorithms \cite{ding_efficient_2018}.


\section{Model}

\paragraph{Notation}

We denote random variables by capital letters and fixed values by lowercase letters.
For a set \(\calS\) and finite sets \(\calT,\calU\), we denote by \(\calS^{\calT\times\calU}\) the set of matrices with entries in \(\calS\), rows indexed by \(\calT\) and columns indexed by \(\calU\).
We mark vectors with arrows and write matrices in boldface.
Given some matrix \(\dul{z}\),  we denote its \(i\)-th row by \(\dul{z}_{i*}\), \(j\)-th column by \(\dul{z}_{*j}\) and its \((i,j)\)-th entry by \(\dul{z}_{ij}\).
We denote the identity matrix with rows and columns indexed by $\calT$ by \(\dul{I}^{\calT}\).
When the indexing set is clear from context, we will drop the superscript.
We denote the set of integers from \(1\) to \(n\) by \([n]\).

\subsection{General problem formulation}
\label{subsec:problem}

In this model, a database is just a function from a set of users to some space.
The value of the function for a user $u$ is the database entry for that user.
Cullina, Mittal, and Kiyavash considered database entries in finite alphabets\cite{DBLP:conf/isit/2018}.
In this paper, we consider database entries that are finite dimensional real vectors sampled from a gaussian distribution.

We are given two sets of user identifiers, \(\UA\) and \(\UB\), with \(|\UA|=|\UB|=n\).
We express the content of databases by matrices \(\dul{A}\in\realS^{\UA\times[\da]}\) and \(\dul{B}\in\realS^{\UB\times[\db]}\), so \(\da\) and \(\db\) are the lengths of feature vectors.

There exists a natural bijective correspondence between the identifier sets, i.e. each identifier in one set is related to exactly one identifier in the other set. We express this correspondence by the bijective matching \(M \subseteq \UA\times\UB\).


Let $p_{\ul{X}\ul{Y}}$ be the density of jointly gaussian random variables $\ul{X} \in \realS^{\da}$ and $\ul{Y} \in \realS^{\db}$ such that $(\ul{X},\ul{Y}) \sim \calN\bpth{\ul{\mu},\dul{\Sigma}}$.

The density $p_{\dul{A}\dul{B}|M}$ is defined as follows.
For each $(u,v) \in M$, $(\dul{A}_{u*},\dul{B}_{v*}) \sim p_{\ul{X}\ul{Y}}$ and these $n$ random variables are independent:
\[
  p_{\dul{A}\dul{B}|M}(\dul{a},\dul{b}|m) = \prod_{(u,v) \in m} p_{\ul{X}\ul{Y}}(\dul{a}_{u*},\dul{b}_{v*}).
\]
The matching $M$ is uniformly distributed over the $n!$ bijective matchings between $\UA$ and $\UB$.

The database alignment problem is to recover $M$ from $\dul{A}\dul{B}$, given knowledge of $p_{\ul{X}\ul{Y}}$.

Observe that the rows of $\dul{A}$ are i.i.d. and that $\dul{A}$ is independent of $M$.
The same is true for $\dul{B}$.
In other words, by examining one database, an observer learns nothing about $M$.

A pair of databases are illustrated in Figure \ref{fig}.

\paragraph{Canonical form of covariance}
We write $\ul{\mu} = \crocVec{\mua}{\mub}$ and \(\dul{\Sigma} = \crocMat{\Siga}{\Sigab}{\Sigab^\top}{\Sigb}\), so $\dul{A}_{u*} \sim \calN(\mua,\Siga)$ for each $u \in \UA$ and $\dul{B}_{u*} \sim \calN(\mub,\Sigb)$ for each $v \in \UB$.

Let $\da'$ be the dimension of the support of $\ul{X}$, i.e. the rank of $\Siga$.
Let $\phi : \realS^{[\da]} \to \realS^{\da'}$ be an affine transformation that is injective on the support of $\ul{X}$.
If we apply $\phi$ to each row of $\dul{A}$, which can be done without knowledge of $M$, we obtain an equivalent database alignment problem.
Similarly, the database $\dul{B}$ can be transformed to obtain an equivalent problem.

For any gaussian database alignment problem, there is an equivalent problem with $\ul{\mu} = \ul{0}$ and
\[
  \dul{\Sigma} = \crocMat{\dul{I}^{[d]}}{\diag(\ul{\rho})}{\diag(\ul{\rho})}{\dul{I}^{[d]}} = \bigoplus_{i \in [d]} \crocMat{1}{\rho_i}{\rho_i}{1}
\]
where $d = \min(d_a,d_b)$.
Thus the correlation structure of $(\ul{X},\ul{Y})$ is completely summarized by the vector $\ul{\rho} \in \realS^d$.
The explicit transformations that put $\dul{\Sigma}$ into this form are described in \hyperref[sec:featureTransformation]{Appendix \ref*{sec:featureTransformation}}.

\bcomment{

  We say two identifiers are not related only if their feature vectors are independent, i.e. \((u,v)\notin M \implies \ul{A}_{u*}\independent \ul{B}_{v*}\).
  Given any pair of related identifiers \((u,v)\in M\), the pair of feature vectors \((\ul{A}_{u*},\ul{B}_{v*})\) is jointly Gaussian and distributed as \(\calN\bpth{\ul{\mu},\dul{\Sigma}}\) where \(\ul{\mu}\) and \(\dul{\Sigma}\) do not depend on the pair \((u,v)\in M\).
  We write \(\dul{\Sigma} = \pthMat{\Siga}{\Sigab}{\Sigab^\top}{\Sigb}\) and assume both \(\Siga\in\realS^{d_1\times d_1}\) and \(\Sigb\in\realS^{d_2\times d_2}\) are invertible.

  As shown in Appendix \ref{sec:featureTransformation}, it is possible to transform the feature vectors to get a special case of the problem setting with some nice properties. Therefore assuming these properties does not result in a loss of generality.

  Let feature vectors in the two datasets have equal length, i.e. \(d_1=d_2=d\). Furthermore let features have zero mean and unit variance, i.e. \(\ul{\mu}=\ul{0}\) and \(\Siga=\Sigb=\dul{I}^d\). Finally let features pairs at each index be independent, i.e. let \(\Sigma_{12}=\diag(\ul{\rho})\) be a diagonal matrix with \(-1<\rho_i<1\) for all \(i\in[d]\).
  
}

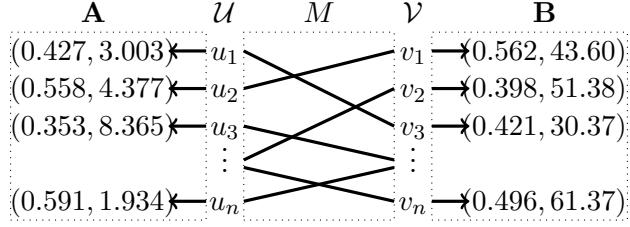
\begin{figure}
  \centering
  \begin{tikzpicture}[
    scale = 1/2,
    text height = 1.5ex,
    text depth =.1ex,
    b/.style={very thick}
    ]
    \draw (0,7) node {$M$};
    \draw (-6,7) node {$\dul{A}$};
    \draw (6,7) node {$\dul{B}$};
    \draw[dotted] (-2,6.5) -- (2,6.5) -- (2,1.5) -- (-2,1.5) -- cycle;
    \draw[dotted] (-8.2,6.5) -- (-3,6.5) -- (-3,1.5) -- (-8.2,1.5) -- cycle;
    \draw[dotted] ( 8.2,6.5) -- ( 3,6.5) -- ( 3,1.5) -- ( 8.2,1.5) -- cycle; 
    \draw (-2.5,7) node {$\UA$};
    \draw ( 2.5,7) node {$\UB$};
    \draw (-2.5,6) node {$u_1$};
    \draw (-2.5,5) node {$u_2$};
    \draw (-2.5,4) node {$u_3$};
    \draw (-2.5,3) node {$\vdots$};
    \draw (-2.5,2) node {$u_n$};

    \draw (2.5,6) node {$v_1$};
    \draw (2.5,5) node {$v_2$};
    \draw (2.5,4) node {$v_3$};
    \draw (2.5,3) node {$\vdots$};
    \draw (2.5,2) node {$v_n$};

    \draw[b] (-2,6) -- (2,4);
    \draw[b] (-2,5) -- (2,6);
    \draw[b] (-2,4) -- (2,3.1);
    \draw[b] (-2,3.1) -- (2,5);
    \draw[b] (-2,2.9) -- (2,2);
    \draw[b] (-2,2) -- (2,2.9);

    \draw (-6,6) node {$(0.427,3.003)$};
    \draw (-6,5) node {$(0.558,4.377)$};
    \draw (-6,4) node {$(0.353,8.365)$};
    \draw (-6,2) node {$(0.591,1.934)$};

    \draw (6,6) node {$(0.562,43.60)$};
    \draw (6,5) node {$(0.398,51.38)$};
    \draw (6,4) node {$(0.421,30.37)$};
    \draw (6,2) node {$(0.496,61.37)$};

    \draw[b,->] (-3,6) -- (-4,6);
    \draw[b,->] (-3,5) -- (-4,5);
    \draw[b,->] (-3,4) -- (-4,4);
    \draw[b,->] (-3,2) -- (-4,2);

    \draw[b,->] (3,6) -- (4,6);
    \draw[b,->] (3,5) -- (4,5);
    \draw[b,->] (3,4) -- (4,4);
    \draw[b,->] (3,2) -- (4,2);

  \end{tikzpicture}
  \caption{
    Databases $\dul{A}$ and $\dul{B}$ with $d_a = d_b = 2$ and a matching $M$ between their user identifier sets.
  }
  \label{fig}
\end{figure}

\subsection{Correlation measures}
\label{section:corr}
Let \(I_{\ul{X}\ul{Y}} \define I(\dul{A}_{u*},\dul{B}_{v*},|(u,v)\in M)\) denote the mutual information between any pair of related identifiers coming from \((u,v)\in M\).
Then
\begin{align*}
  I_{\ul{X}\ul{Y}}
  &= -\frac{1}{2}\log\frac{\det\bpth{\dul{\Sigma}}}{\det\bpth{\Siga}\cdot\det\bpth{\Sigb}}\\
  &= -\frac{1}{2}\sum_{i\in[d]}\log\pth{1-\rho_i^2}.
\end{align*}
Under the canonical formulation where \(\Siga=\Sigb=\dul{I}^d\) and \(\Sigab=\diag(\ul{\rho})\) this becomes

Given any \((u,v)\in M\) and \((\ul{X},\ul{Y})=(\dul{A}_{u*}^\top,\dul{B}_{v*}^\top)\),
\begin{align*}
  \sigma_{\ul{X}\ul{Y}}^2 \define \Var\pth{\log \frac{p_{\ul{X}\ul{Y}}(\ul{X},\ul{Y})}{p_{\ul{X}}(\ul{X})p_{\ul{Y}}(\ul{Y})}}.
\end{align*}
Then \(\sigma_{\ul{X}\ul{Y}}^2 = \tr\bpth{\Siga^{-1}\Sigab\Sigb^{-1}\Sigab^\top}\).
Furthermore under the canonical formulation where \(\Siga=\Sigb=\dul{I}^d\) and \(\Sigab=\diag(\ul{\rho})\) this simplifies to \(\sigma_{\ul{X}\ul{Y}}^2 = \sum \rho_i^2\).
These calculations are made explicit in our supplementary material.

Note that \(\sigma_{\ul{X}\ul{Y}}\) is upper bounded by \(\sqrt{2I_{\ul{X}\ul{Y}}}\). This can easily be seen in the canonical formulation, where \(\sigma_{\ul{X}\ul{Y}}^2 = \sum \rho_i^2 \leq -\sum\log(1-\rho_i^2) = 2 I_{\ul{X}\ul{Y}}\).


\section{Results}

Our results identify conditions on \(I_{\ul{X}\ul{Y}}\) and \(\sigma_{\ul{X}\ul{Y}}\), as defined in Section~\ref{section:corr}.

\paragraph{MAP estimation}

The algorithm considers all possible alignments between the two sets and chooses the most likely one.
The log likelihood of an alignment is, by the independence of correlated feature pairs, equal to the sum of the log likelihood of each aligned feature pair. MAP estimation can then be implemented by computing the joint likelihood for each feature pair in \(\calO(n^2d)\)-time and computing the maximum weight matching between databases in \(\calO(n^3)\)-time using the Hungarian algorithm.

\newcommand{\thmAchMAP}{
    \textbf{\upshape(Achievability)} If mutual information between feature pairs \(I_{\ul{X}\ul{Y}} \geq 2\log n + \omega(1)\), then the MAP estimator returns the proper alignment with probability \(1-o(1)\).
}
\begin{theorem}
\label{thm:achMAP}
\thmAchMAP
\end{theorem}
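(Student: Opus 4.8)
\emph{Plan.} The plan is to recast MAP estimation as a maximum-weight bipartite matching and then bound the failure probability by a union bound over cyclic re-routings of the true matching. Since $p_{\dul{A}\dul{B}|M}$ factorizes over matched pairs, the MAP estimate maximizes $\sum_{(u,v)\in m}\log p_{\ul{X}\ul{Y}}(\dul{a}_{u*},\dul{b}_{v*})$ over bijections $m$; because $m$ is a bijection, subtracting the $m$-independent quantity $\sum_{u\in\UA}\log p_{\ul{X}}(\dul{a}_{u*})+\sum_{v\in\UB}\log p_{\ul{Y}}(\dul{b}_{v*})$ reduces this to maximizing $\sum_{(u,v)\in m} w_{uv}$, where $w_{uv}\define\log\frac{p_{\ul{X}\ul{Y}}(\dul{a}_{u*},\dul{b}_{v*})}{p_{\ul{X}}(\dul{a}_{u*})\,p_{\ul{Y}}(\dul{b}_{v*})}$ is the pairwise log-likelihood ratio. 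Taking $M$ to be the identity matching on $[n]$ without loss of generality, the estimator errs only if some permutation $\pi\neq\mathrm{id}$ satisfies $\sum_i(w_{i\pi(i)}-w_{ii})\ge 0$; decomposing $\pi$ into its nontrivial cycles, the sum splits over the cycles, so at least one cycle must fail to strictly decrease the total weight. Letting $p_k$ denote the probability that a fixed cycle of length $k$ does not decrease the weight, and noting there are $\frac{n!}{k(n-k)!}\le n^k/k$ such cycles,
\[
  \Pr[\text{MAP fails}] \;\le\; \sum_{k=2}^{n}\frac{n!}{k\,(n-k)!}\;p_k \;\le\; \sum_{k=2}^{n}\frac1k\,\bpth{n\,e^{-I_{\ul{X}\ul{Y}}/2}}^{k},
\]
where the last inequality is the lemma below. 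Under $I_{\ul{X}\ul{Y}}\ge 2\log n+\omega(1)$ we have $n\,e^{-I_{\ul{X}\ul{Y}}/2}=e^{-\omega(1)}=o(1)$, so this geometric-type series is $o(1)$, which is the claim.

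\emph{Key lemma: $p_k\le e^{-kI_{\ul{X}\ul{Y}}/2}$.} I would argue in the canonical form $\dul{\Sigma}=\bigoplus_{i\in[d]}\crocMat{1}{\rho_i}{\rho_i}{1}$. A direct evaluation of $w_{uv}$ shows that for the cycle $(1\,2\,\cdots\,k)$ the weight change $\Delta\define\sum_{j=1}^{k}\bpth{w_{jj}-w_{j,j+1}}$ (indices mod $k$) simplifies sharply: the $\log(1-\rho_i^2)$ normalizers and the square terms of the Gaussian exponents cancel (the $X$-squares termwise, the $Y$-squares after cyclic re-indexing), leaving
\[
  \Delta \;=\; \sum_{i=1}^{d}\frac{\rho_i}{1-\rho_i^2}\;\ul{U}^{(i)\top}\bpth{\dul{I}^{[k]}-\dul{P}}\ul{V}^{(i)},
\]
where $\dul{P}$ is the cyclic-shift permutation matrix on $[k]$ and, for each $i$, $\bpth{\ul{U}^{(i)},\ul{V}^{(i)}}$ is a pair of $\calN(\ul{0},\dul{I}^{[k]})$ vectors with $\mathrm{Cov}\bpth{U^{(i)}_j,V^{(i)}_j}=\rho_i$ and no other cross-correlations, the $d$ pairs being mutually independent. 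I then apply the Chernoff bound $p_k=\Pr[\Delta\le 0]\le\E{e^{-\Delta/2}}$: since $\dul{I}^{[k]}-\dul{P}$ is circulant, this moment generating function of a Gaussian bilinear form factors over the coordinates $i$ and over the Fourier modes $j\in\{0,\dots,k-1\}$ of $\dul{P}$ (whose eigenvalues are $1-e^{2\pi\mathrm{i}j/k}$), giving
\[
  \E{e^{-\Delta/2}} \;=\; \prod_{i=1}^{d}\prod_{j=0}^{k-1}\pth{1+\frac{\rho_i^2}{2(1-\rho_i^2)}\pth{1-\cos\tfrac{2\pi j}{k}}}^{-1/2}.
\]
Finally, using $\sum_{j=0}^{k-1}\bpth{1-\cos\tfrac{2\pi j}{k}}=k$ and concavity of $x\mapsto\log(1+cx)$ on $[0,2]$ (a chord bound from $x=0$ to $x=2$), the inner product over $j$ is at least $(1-\rho_i^2)^{-k/2}$, so $p_k\le\prod_{i}(1-\rho_i^2)^{k/4}=e^{-kI_{\ul{X}\ul{Y}}/2}$, by $I_{\ul{X}\ul{Y}}=-\tfrac12\sum_i\log(1-\rho_i^2)$.

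\emph{Main obstacle.} The delicate part is the moment generating function computation and its consequences. One must check that $\E{e^{-s\Delta}}$ is finite at the Chernoff parameter $s=1/2$ --- it is, since every factor in the displayed product exceeds $1$ once $k\ge2$ --- and that $s=1/2$ is the optimal choice (the exponent is monotone in $s(1-s)$ on $(0,1)$). It is also worth noting that the chord bound is an equality at $k=2$, so no slack is lost on transpositions; since the $k=2$ terms, of which there are $\binom n2$, dominate the union bound, this is precisely what pins the threshold at $2\log n$. Finally, note that this direction uses no hypothesis on $\sigma_{\ul{X}\ul{Y}}$.
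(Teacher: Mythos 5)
Your proof is correct and follows essentially the same route as the paper's: a Chernoff/Bhattacharyya bound at exponent $1/2$, a decomposition over the cycles of the mismatch permutation, evaluation of the per-cycle term via the circulant (Fourier) eigenvalues involving $\cos(2\pi j/k)$, a convexity argument reducing a length-$k$ cycle to the transposition case, and a union bound with the $n^k$ count. The only differences are organizational --- you take the union bound over individual cycles rather than over whole matchings, compute the moment generating function of the Gaussian bilinear form directly instead of going through the block-determinant identity for the Bhattacharyya integral, and replace the paper's rearrangement argument (Lemma~\ref{lemma:elementary}) with an equivalent chord bound on $\log(1+cx)$ --- none of which changes the substance.
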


\newcommand{\thmConMAP}{
    \textbf{\upshape(Converse)} Let \(d\in\natS\) such that \(d\geq\omega(1)\). Furthermore let \(\Siga=\Sigb=\dul{I}^d\) and \(\Sigab=\rho \dul{I}\). If \(I_{\ul{X}\ul{Y}} \leq 2\log n(1 - \Omega(1))\), then any for algorithm, the probability of returning the proper alignment is \(o(1)\).
}
\begin{theorem}
\label{thm:conMAP}
\thmConMAP
\end{theorem}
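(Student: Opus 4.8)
Since $M$ is uniform, the MAP estimator minimises the probability of error, so it is enough to show that MAP fails with probability $1-o(1)$. Work in the canonical form and relabel so that $M$ is the identity (with $\UA=\UB=[n]$); since $\Sigab=\rho\dul{I}$ and we may assume $\rho\in(0,1)$ (the case $\rho=0$ being trivial), the log-likelihood of a matching $m$ is, up to an additive constant,
\[
  W(m)=\sum_{(u,v)\in m}\pth{\tfrac{\rho}{1-\rho^2}\ip{\dul{A}_{u*},\dul{B}_{v*}}-\tfrac{\rho^2}{2(1-\rho^2)}\pth{\binorm{\dul{A}_{u*}}^2+\binorm{\dul{B}_{v*}}^2}},
\]
so MAP returns a maximum-weight bipartite matching. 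A necessary condition for the identity to be this maximiser is that no transposition increases $W$; in a transposition the quadratic self-terms cancel, so swapping rows $j$ and $k$ strictly increases $W$ iff $T_{jk}\define\ip{\dul{A}_{j*}-\dul{A}_{k*},\,\dul{B}_{j*}-\dul{B}_{k*}}<0$. Hence $\Pr[\textnormal{MAP correct}]\le\Pr[\,T_{jk}\ge 0\ \forall\,j<k\,]$, and it suffices to prove that with high probability some transposition is improving.

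Next I would pin down the probability that a fixed transposition is improving. Conditioning on the database $\dul{A}$ and using that $M$ is the identity, $\dul{B}_{j*}\mid\dul{A}\sim\calN(\rho\dul{A}_{j*},(1-\rho^2)\dul{I})$ independently over $j$, so
\[
  T_{jk}\mid\dul{A}\sim\calN\!\pth{\rho\,\binorm{\dul{A}_{j*}-\dul{A}_{k*}}^2,\ 2(1-\rho^2)\binorm{\dul{A}_{j*}-\dul{A}_{k*}}^2},
\]
and $\Pr[T_{jk}<0\mid\dul{A}]$ is the standard normal tail at $\frac{\rho}{\sqrt{2(1-\rho^2)}}\binorm{\dul{A}_{j*}-\dul{A}_{k*}}$. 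A Chernoff estimate---write $T_{jk}=2\sum_{i=1}^d\tilde U_i\tilde V_i$ with $(\tilde U_i,\tilde V_i)$ i.i.d.\ standard bivariate normal of correlation $\rho$ and evaluate the moment generating function---gives $\Pr[T_{jk}<0]\le(1-\rho^2)^{d/2}=e^{-I_{\ul{X}\ul{Y}}}$, and restricting to the event that $\binorm{\dul{A}_{j*}-\dul{A}_{k*}}^2$ attains the Chernoff-optimal value $\approx 2(1-\rho^2)d$ gives a matching lower bound up to an $n^{o(1)}$ factor. Since $I_{\ul{X}\ul{Y}}\le 2\log n\,(1-\Omega(1))$, the expected number of improving transpositions is $\binom n2\Pr[T_{12}<0]\ge n^{\Omega(1)}\to\infty$.

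The remaining step---turning this divergent first moment into a high-probability statement---is the crux, and I expect it to be where essentially all the work lies. The improving-transposition events are positively correlated: two transpositions through a common row $j$ have, conditionally on $\dul{A}$, jointly Gaussian statistics $(T_{jk},T_{jk'})$ whose correlation can be as large as $\tfrac12$, and a plain second moment on the raw count fails near the threshold because a row $\dul{A}_{j*}$ that happens to lie close to many others supports many improving transpositions, inflating the variance. My plan is to condition on $\dul{A}$---which renders the improving-transposition statistics explicitly jointly Gaussian with covariances determined entirely by the pairwise geometry of the rows of $\dul{A}$---and then run a truncated second moment, counting only improving transpositions $\{j,k\}$ for which neither endpoint lies in more than a slowly growing number $B$ of ``close'' row-pairs (pairs with small $\binorm{\dul{A}_{j*}-\dul{A}_{k*}}$), so that the restricted events are decoupled enough for a Paley--Zygmund argument; one must then also verify, using $d=\omega(1)$ to keep the relevant large-deviation rate functions regular, that the truncation discards only a $1-o(1)$ fraction of the first moment and that $\dul{A}$ typically supplies enough close pairs. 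Carrying this out shows that with high probability some transposition improves the true matching, whence MAP---and therefore every algorithm---returns the proper alignment with probability $o(1)$.
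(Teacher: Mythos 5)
Your reduction to transpositions and your first-moment computation coincide with the paper's argument: the paper likewise restricts attention to the $\binom{n}{2}$ matchings $m'$ with $|m\cap m'|=n-2$, upper-bounds the single-transposition error probability by the Bhattacharyya coefficient $R(m,m')=(1-\rho^2)^{d/2}=e^{-I_{\ul{X}\ul{Y}}}$, and invokes Cram\'er's theorem (this is where $d=\omega(1)$ is used) to get the matching lower bound $\varepsilon_1\geq(1-\rho^2)^{\frac{d}{2}(1+o(1))}$, so that the expected number of improving transpositions is $n^{\Omega(1)}$. Up to this point the two proofs are essentially identical.

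The gap is the step you yourself flag as the crux: you never establish concentration, you only outline a plan (condition on $\dul{A}$, truncate away rows with too many close neighbours, Paley--Zygmund), and none of the required estimates are carried out. Moreover, the diagnosis motivating that plan --- that ``a plain second moment on the raw count fails near the threshold'' --- is incorrect, and the paper closes the proof with exactly the plain second moment. For two transpositions $m',m''$ of $m$ sharing one user, the joint error probability is bounded by a two-parameter Chernoff bound with exponents $\theta=\theta'=\tfrac12$, which makes the $p(\cdot|m)$ factor drop out entirely and leaves $\int\sqrt{p(\dul{f}|m')p(\dul{f}|m'')}\,d\dul{f}=R(m',m'')$; since $m'\circ m''^{\top}$ is a $3$-cycle, Lemmas \ref{lemmma:bound2determinant} and \ref{lemma:determinant} evaluate this to at most $(1-\rho^2)^{3d/4}=e^{-\frac{3}{2}I_{\ul{X}\ul{Y}}}$ (Lemma \ref{lemma:errorIntersection}). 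Hence $\varepsilon_2/\varepsilon_1^2\leq(1-\rho^2)^{-\frac{d}{4}(1+o(1))}=e^{\frac{1}{2}I_{\ul{X}\ul{Y}}(1+o(1))}\leq n^{1-\Omega(1)}$, and since there are only $O(n^3)$ overlapping pairs of transpositions against $\mathbb{E}^2 X=\Theta(n^4\varepsilon_1^2)$ for the raw count $X$ (disjoint transpositions involve disjoint rows and are exactly independent, so they contribute nothing to the variance), Chebyshev gives $\Pr[X=0]\leq\calO\pth{\frac{1}{n^2\varepsilon_1}+\frac{\varepsilon_2}{n\varepsilon_1^2}}\leq n^{-\Omega(1)}$. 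Your worry about a row of $\dul{A}$ that is close to many others is already absorbed by this unconditional joint Chernoff bound; the conditioning-plus-truncation machinery is unnecessary, and without carrying it out your argument is incomplete precisely where all the difficulty lies.
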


\paragraph{Binary hypothesis testing}

The algorithm checks every possible pair of identifiers and uses a threshold-based method to decide whether to match the pair or not.
This can be done in \(\calO(n^2d)\)-time, which is the complexity of computing joint likelihoods for each feature pair.

\newcommand{\thmAchLRT}{
    \textbf{\upshape(Achievability)} If
    \begin{align*}
        I_{\ul{X}\ul{Y}} \geq \sigma_{\ul{X}\ul{Y}}\cdot\sqrt{\frac{n}{\varepsilon_{FN}}} + \log\frac{n^2}{\varepsilon_{FP}},
    \end{align*}
    then, choosing the threshold such that \(\log(n^2/\varepsilon_{FP})\leq \tau \leq I_{\ul{X}\ul{Y}}-\sigma_{\ul{X}\ul{Y}}\sqrt{n/\varepsilon_{FN}}\), the binary hypothesis test gives no more than \(\varepsilon_{FN}\) false negatives and \(\varepsilon_{FP}\) false positives in expectation.
}

\begin{theorem}
\label{thm:achLRT}
\thmAchLRT
\end{theorem}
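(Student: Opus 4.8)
The plan is to bound the two expected error counts separately by a first-moment (union) bound over all $n^2$ candidate pairs, using Chebyshev's inequality on the false-negative side and an exponential Markov inequality on the false-positive side. Fix the test statistic: for $(u,v) \in \UA \times \UB$ let $L_{uv} = \log\frac{p_{\ul{X}\ul{Y}}(\dul{A}_{u*},\dul{B}_{v*})}{p_{\ul{X}}(\dul{A}_{u*})\,p_{\ul{Y}}(\dul{B}_{v*})}$, so that the algorithm declares $(u,v)$ matched iff $L_{uv} \geq \tau$. Since the rows of $\dul{A}$ and of $\dul{B}$ are i.i.d., $L_{uv}$ has the law of $\log\frac{p_{\ul{X}\ul{Y}}}{p_{\ul{X}}p_{\ul{Y}}}$ evaluated at a draw from $p_{\ul{X}\ul{Y}}$ when $(u,v)\in M$, and at a draw from $p_{\ul{X}}\otimes p_{\ul{Y}}$ when $(u,v)\notin M$. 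Because the statement concerns only expectations, no independence among the pair events is needed — just linearity of expectation.

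For the false negatives, condition on $(u,v)\in M$, so that $\E{L_{uv}} = I_{\ul{X}\ul{Y}}$ and $\Var(L_{uv}) = \sigma_{\ul{X}\ul{Y}}^2$ by the definitions of mutual information and of $\sigma_{\ul{X}\ul{Y}}^2$ in Section~\ref{section:corr}. Since the prescribed range forces $\tau < I_{\ul{X}\ul{Y}}$, Chebyshev's inequality gives
\[
  \Pr\!\left[L_{uv} < \tau \;\middle|\; (u,v)\in M\right] \;\leq\; \frac{\sigma_{\ul{X}\ul{Y}}^2}{(I_{\ul{X}\ul{Y}} - \tau)^2}.
\]
Summing over the $n$ matched pairs, the expected number of false negatives is at most $n\,\sigma_{\ul{X}\ul{Y}}^2/(I_{\ul{X}\ul{Y}}-\tau)^2$, and this is at most $\varepsilon_{FN}$ exactly when $\tau \leq I_{\ul{X}\ul{Y}} - \sigma_{\ul{X}\ul{Y}}\sqrt{n/\varepsilon_{FN}}$ — the upper end of the allowed range.

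For the false positives, condition on $(u,v)\notin M$, where $\dul{A}_{u*} \independent \dul{B}_{v*}$; then $e^{L_{uv}}$ is exactly the likelihood ratio $p_{\ul{X}\ul{Y}}/(p_{\ul{X}}\otimes p_{\ul{Y}})$ evaluated at a sample from its denominator measure, so $\E{e^{L_{uv}} \mid (u,v)\notin M} = \int p_{\ul{X}\ul{Y}} = 1$. Markov's inequality then gives $\Pr[L_{uv} \geq \tau \mid (u,v)\notin M] = \Pr[e^{L_{uv}} \geq e^{\tau}] \leq e^{-\tau}$. Summing over the fewer than $n^2$ mismatched pairs bounds the expected number of false positives by $n^2 e^{-\tau}$, which is at most $\varepsilon_{FP}$ exactly when $\tau \geq \log(n^2/\varepsilon_{FP})$ — the lower end of the allowed range. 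A threshold meeting both requirements exists iff $\log(n^2/\varepsilon_{FP}) \leq I_{\ul{X}\ul{Y}} - \sigma_{\ul{X}\ul{Y}}\sqrt{n/\varepsilon_{FN}}$, which is precisely the hypothesis of the theorem, so this closes the argument.

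The step I expect to require the most care is the false-negative side: since nothing is assumed about the magnitude of the correlations $\rho_i$, only a second-moment (Chebyshev) bound is available, and this is exactly what produces the $\sqrt{n/\varepsilon_{FN}}$ factor rather than a logarithmic one. Sharpening it would call for a sub-exponential, Hanson--Wright-type concentration bound for the quadratic form $L_{uv}$ under the matched law, which is not needed here. On the false-positive side the exponential Markov bound is already lossless, since $e^{L_{uv}}$ has mean exactly one, so there is no slack to recover.
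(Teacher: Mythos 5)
Your proposal is correct and follows essentially the same route as the paper: Chebyshev's inequality on the matched pairs to control false negatives (yielding the $\sigma_{\ul{X}\ul{Y}}\sqrt{n/\varepsilon_{FN}}$ term), the exponential Markov bound $\E{e^{L_{uv}}\mid (u,v)\notin M}=1$ to control false positives, linearity of expectation over the $n$ matched and fewer than $n^2$ unmatched pairs, and the observation that a valid threshold exists precisely when the two ranges intersect. The only cosmetic difference is that the paper states the Chebyshev step as a two-sided bound and then restricts to one tail, whereas you apply the one-sided version directly.
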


It follows that the following regimes are achievable:
\begin{align*}
    \bullet\,\, I_{\ul{X}\ul{Y}}&\geq \log(n)+\omega(1) & \varepsilon_{FN}&\leq o(n) & \varepsilon_{FP} &\leq o(n)\\
    \bullet\,\, I_{\ul{X}\ul{Y}}&\geq 2\log(n)+\omega(1) & \varepsilon_{FN}&\leq o(n) & \varepsilon_{FP} &\leq o(1)\
\end{align*}

The next theorem holds for databases with any distribution of feature pairs, i.e. not only Gaussians.

\newcommand{\thmConLRT}{
    \textbf{\upshape(Converse)} For any binary hypothesis test, the expected number of false negatives \(\varepsilon_{FN}\) and false positives \(\varepsilon_{FP}\) is lower bounded as
    \begin{align*}
        \varepsilon_{FN}+\varepsilon_{FP} \geq \frac{n}{2}\pth{1 - \frac{I_{\ul{X}\ul{Y}}}{\log n}}\pth{1-\mathcal{O}\pth{\frac{1}{\log n}}}.
    \end{align*}
}
\begin{theorem}
\label{thm:conLRT}
\thmConLRT
\end{theorem}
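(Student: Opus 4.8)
The plan is to reduce $\varepsilon_{FN}+\varepsilon_{FP}$ to a sum of per-row matching error probabilities and then invoke Fano's inequality, with the mutual information that appears controlled by the i.i.d.\ structure of the two databases. Write $\hat M\subseteq\UA\times\UB$ for the set of pairs accepted by the test, so that $\varepsilon_{FN}=\E{\norm{M\setminus\hat M}}$, $\varepsilon_{FP}=\E{\norm{\hat M\setminus M}}$, and $\varepsilon_{FN}+\varepsilon_{FP}=\E{\norm{M\triangle\hat M}}$. For each $u\in\UA$ set $S_u=\acc{v:(u,v)\in\hat M}$ and let $\hat v_u$ be the unique element of $S_u$ when $\norm{S_u}=1$ and an arbitrary fixed vertex otherwise. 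The number of edges of $M\triangle\hat M$ incident to $u$ is $\ind{M(u)\notin S_u}+\norm{S_u\setminus\acc{M(u)}}$, and checking the cases $\norm{S_u}=0$, $1$, and $\geq 2$ shows this is at least $\ind{\hat v_u\neq M(u)}$; since every edge of $M\triangle\hat M$ has exactly one endpoint in $\UA$, summing over $u$ and taking expectations gives $\varepsilon_{FN}+\varepsilon_{FP}\geq\sum_{u\in\UA}\Pr\croc{\hat v_u\neq M(u)}$. Each $\hat v_u$ is a function of $(\dul A,\dul B)$ (and possibly independent randomness), so each term is bounded below by the Bayes error $P_{e,u}$ of estimating $M(u)$ from $(\dul A,\dul B)$.

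Since $M(u)$ is uniform on the $n$-element set $\UB$, Fano's inequality gives $P_{e,u}\geq\bpth{H\pth{M(u)\mid\dul A,\dul B}-\log 2}/\log(n-1)$. Summing over $u$ and using subadditivity of entropy, $\sum_{u}H\pth{M(u)\mid\dul A,\dul B}\geq H\pth{M\mid\dul A,\dul B}=\log(n!)-I\pth{M;\dul A,\dul B}$, so
\[
  \varepsilon_{FN}+\varepsilon_{FP}\;\geq\;\frac{\log(n!)-I\pth{M;\dul A,\dul B}-n\log 2}{\log(n-1)}.
\]
It remains to control $I\pth{M;\dul A,\dul B}$, and this is the only place the ``any distribution'' generality is used: because the rows of $\dul A$ (resp.\ of $\dul B$) are i.i.d., $h(\dul A)=n\,h(\ul X)$ and $h(\dul B)=n\,h(\ul Y)$, while for any fixed value of $M$ the pair $(\dul A,\dul B)$ is a product of $n$ independent blocks distributed as $p_{\ul X\ul Y}$, so $h(\dul A,\dul B\mid M)=n\,h(\ul X,\ul Y)$; hence $I\pth{M;\dul A,\dul B}=h(\dul A,\dul B)-h(\dul A,\dul B\mid M)\leq h(\dul A)+h(\dul B)-h(\dul A,\dul B\mid M)=n\,I_{\ul X\ul Y}$. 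Substituting this, together with $\log(n!)=n\log n-\Theta(n)$ and $\log(n-1)=\log n-\mathcal{O}(1/n)$, yields $\varepsilon_{FN}+\varepsilon_{FP}\geq n\pth{1-\tfrac{I_{\ul X\ul Y}}{\log n}}-\Theta(n/\log n)$, which implies the stated bound; it is in fact about a factor of two stronger than stated whenever $I_{\ul X\ul Y}$ is bounded away from $\log n$, the $\tfrac12$ in the statement leaving room to absorb the $\Theta(n/\log n)$ correction and the $\log(n-1)$-versus-$\log n$ discrepancy.

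I expect the first reduction to be the crux: the test outputs an arbitrary set of accepted pairs rather than a matching, so one must account correctly for rows that are matched to several columns or to none, which is exactly what the definition of $\hat v_u$ and the incident-edge count accomplish; morally the argument shows that the combinatorial constraint ``each row has one true match'' together with the information bottleneck ``only $n\,I_{\ul X\ul Y}$ bits about $M$ are available'' forces $\Omega(n)$ errors. Once that reduction is in place the Fano step is routine and the mutual-information estimate is a one-line entropy computation, so the remaining effort is only the careful bookkeeping of the lower-order terms so that they are swallowed by the $\pth{1-\mathcal{O}(1/\log n)}$ factor.
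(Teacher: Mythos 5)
Your argument is correct in substance but takes a genuinely different route from the paper's. The paper fixes a single pair $(u,v)$, applies the \emph{binary} form of Fano's inequality to $\dul{M}_{u,v}$ conditioned only on $(\dul{A}_u,\dul{B}_v)$ (its Lemma \ref{lemma:conditionalEntropy} shows $H(\dul{M}_{u,v}|\dul{A}_u,\dul{B}_v)\geq\frac{\log n - I_{\ul{X}\ul{Y}}}{n}$), and then multiplies the resulting per-pair error probability by the $n^2$ pairs; this exploits the fact that the test's decision on $(u,v)$ depends only on that pair's features. You instead run an $n$-ary Fano argument per row: the combinatorial reduction $\norm{M\triangle\hat M}\geq\sum_u\ind{\hat v_u\neq M(u)}$, the subadditivity step $\sum_u H(M(u)|\dul{A},\dul{B})\geq H(M|\dul{A},\dul{B})=\log(n!)-I(M;\dul{A},\dul{B})$, and the bound $I(M;\dul{A},\dul{B})\leq n I_{\ul{X}\ul{Y}}$ via the i.i.d.\ row structure. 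What this buys is generality: your bound holds for \emph{any} estimator $\hat M\subseteq\UA\times\UB$ measurable with respect to the full databases, not just per-pair threshold rules, and in the main regime it is tighter by roughly a factor of two. What it costs is at the boundary: your final inequality $\varepsilon_{FN}+\varepsilon_{FP}\geq n\pth{1-\frac{I_{\ul{X}\ul{Y}}}{\log n}}-\Theta(n/\log n)$ only implies the stated product-form bound when $\log n - I_{\ul{X}\ul{Y}}\geq\omega(1)$ (more precisely when $1-\frac{I_{\ul{X}\ul{Y}}}{\log n}\gtrsim\frac{c}{\log n}$ for a suitable constant $c$); when $\log n - I_{\ul{X}\ul{Y}}=\mathcal{O}(1)$ your right-hand side can be negative while the theorem's is a positive $\Theta(n/\log n)$. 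This is a minor bookkeeping issue rather than a conceptual gap --- the corollary actually used downstream ($\varepsilon_{FN}+\varepsilon_{FP}\geq\Omega(n)$ when $I_{\ul{X}\ul{Y}}\leq(1-\Omega(1))\log n$) follows in full from your argument, and the paper's own last algebraic step is comparably loose --- but you should either restrict the claim to that regime or keep the $H_b(\epsilon)$ term rather than linearizing it, as the paper does, to recover the statement verbatim. One further caveat: your identity $I(M;\dul{A},\dul{B})=h(\dul{A},\dul{B})-h(\dul{A},\dul{B}|M)$ presumes the differential entropies exist and are finite, which is automatic for the Gaussian model but deserves a remark given that the theorem is asserted for arbitrary feature distributions.
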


It follows that, if \(I_{\ul{X}\ul{Y}} \leq \log n\bpth{1 - \Omega(1)}\), then any binary hypothesis test has expected number of errors \(\varepsilon_{FN}+\varepsilon_{FP} \geq \Omega(n)\).

\TODO{
\paragraph{Summary}

In the table, \(\varepsilon^{-}\) and \(\varepsilon^{+}\) denote the expected number of false negatives and false positives, respectively. The MAP algorithm gives us a size \(n\) matching, so for every false negative we get a false positive and thus \(\varepsilon^{+}=\varepsilon^{-}\) which we will simply denote by \(\varepsilon\).

In the cannonical setting with \(\Sigab = \rho\dul{I}\) and \(d\geq \omega(1)\), the achievable and converse regions for BHT and MAP are as follows.

\begin{center}
\begin{tabular}{ r | c || c | c}
  \(\frac{I_{\ul{X}\ul{Y}}}{\log n}\) & \multicolumn{3}{l}{\qquad\qquad 1 \,\,\,\,\qquad\qquad 2}\\
  \hline
  \multirow{2}{*}{BHT} & \(\varepsilon^{-}+\varepsilon^{+}\) & \(\varepsilon^{-} \leq o(n)\) & \(\varepsilon^{-} \leq o(n)\) \\
  & \(\geq \Omega(n)\) &  \(\varepsilon^{+} \leq o(n)\) & \(\varepsilon^{+} \leq o(1)\)\\
  \hline
  MAP & \multicolumn{2}{c||}{\(\varepsilon\geq \Omega(1)\)} & \(\varepsilon\leq o(1)\)
\end{tabular}
\end{center}

Notice the tight gap between the converse and achievability regions, corresponding to regimes \(\log n(1-o(1))\leq I_{\ul{X}\ul{Y}}\leq \log n +\calO(1)\) and \(2\log n - \calO(1) \leq I_{\ul{X}\ul{Y}} \leq 2\log n + \calO(1)\).
}


\section{MAP estimation}

\paragraph{Matching algorithm}

The maximum a posteriori estimator is the optimal estimator  for the exact matching \(M\) given \(F\). Given some realization \(\dul{f}=(\dul{a},\dul{b})\),
\begin{align*}
  \hat{m}(\dul{f})
  &= \argmax_{m}\,\,\, \Pr\croc{M=m|\dul{F}=\dul{f}}\\
  &= \argmax_{m}\,\,\, \frac{p_{\dul{F}|M}(\dul{f}|m)P_{M}(m)}{p_{\dul{F}}(\dul{f})}\\
  &\eql{a} \argmax_{m}\,\,\, p_{\dul{F}|M}(\dul{f}|m)
\end{align*}
where $(a)$ follows from the fact that $M$ has a uniform distribution.

\subsection{Achievability analysis}

We establish a sufficient condition on the mutual information \(I_{XY}\) between feature pairs  to achieve a perfect alignment. The rest of this section assumes the cannonical setting. However, by the equivalence between the general setting and the canonical setting (as shown in \hyperref[sec:featureTransformation]{Appendix \ref*{sec:featureTransformation}}), the result directly applies to the general setting.

Our analysis goes as follows: Lemma \ref{lemma:ChernoffBound} sets an upper bound on the error probability that a given matching is more likely than the actual one. This bound is in the form of a function \(R\) whose explicit value remains to be determined. Lemma \ref{lemmma:bound2determinant} gives an expression of \(R\) that has a decomposition with terms corresponding to each cycle of `mismatchings'. Finally Lemma \ref{lemma:determinant} gives the explicit expression for each of these cycle-terms and Lemma \ref{lemma:bound4determinant} bounds their product by a function whose value only depends on the number of mismatchings. Joining these results gives us the achievability condition in Theorem \ref{thm:achMAP}.

\begin{definition}
  Given any pair of bijective matchings \(m_1,m_2\subseteq\UA\times\UB\), define the event
  \begin{align*}
    \calE(m_1,m_2) = \acc{\dul{f} 
    : p_{\dul{F}|M}(\dul{f}|m_1) \leq p_{\dul{F}|M}(\dul{f}|m_2)}.
  \end{align*}
\end{definition}
Notice that given matching \(m=M\), the MAP estimator fails if and only if there exists some matching \(m'\neq m\) such that \(\dul{F}\in \calE(m,m')\).

\begin{definition}
  Given any pair of bijective matchings \(m_1,m_2\subseteq\UA\times\UB\), define the function
  \begin{align*}
    R(m_1,m_2) &\define \int \sqrt{p_{\dul{F}|M}(\dul{f}|m_1) p_{\dul{F}|M}(\dul{f}|m_2)} \mathop{d\dul{f}}
  \end{align*}
  where the integral is over the whole space $\realS^{(\UA \sqcup \UB) \times [d]}$. 
\end{definition}
\begin{lemma}
  \label{lemma:ChernoffBound}
  For any pair of bijective matchings \(m_1,m_2\subseteq\UA\times\UB\)
  \begin{align*}
    \Pr\croc{\dul{F}\in\calE(m_1,m_2)|M=m_1} \leq R(m_1,m_2)
  \end{align*}
\end{lemma}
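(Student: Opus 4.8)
The plan is to establish this as a direct instance of the Bhattacharyya (Chernoff-type) bound on the error probability of a likelihood comparison. The key observation is that the event $\calE(m_1,m_2)$ is precisely the set of realizations on which $p_{\dul{F}|M}(\dul{f}|m_1) \leq p_{\dul{F}|M}(\dul{f}|m_2)$, so on this event each factor $\sqrt{p_{\dul{F}|M}(\dul{f}|m_1)}$ can be bounded above by $\sqrt{p_{\dul{F}|M}(\dul{f}|m_2)}$.

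First I would write the probability of interest as an integral against the density under $m_1$:
\begin{align*}
  \Pr\croc{\dul{F}\in\calE(m_1,m_2)\mid M=m_1} = \int_{\calE(m_1,m_2)} p_{\dul{F}|M}(\dul{f}|m_1) \mathop{d\dul{f}}.
\end{align*}
Next, for every $\dul{f}\in\calE(m_1,m_2)$ I would use the defining inequality $p_{\dul{F}|M}(\dul{f}|m_1)\leq p_{\dul{F}|M}(\dul{f}|m_2)$ to write
\begin{align*}
  p_{\dul{F}|M}(\dul{f}|m_1) = \sqrt{p_{\dul{F}|M}(\dul{f}|m_1)}\cdot\sqrt{p_{\dul{F}|M}(\dul{f}|m_1)} \leq \sqrt{p_{\dul{F}|M}(\dul{f}|m_1)\, p_{\dul{F}|M}(\dul{f}|m_2)}.
\end{align*}
Substituting this pointwise bound into the integral gives
\begin{align*}
  \Pr\croc{\dul{F}\in\calE(m_1,m_2)\mid M=m_1} \leq \int_{\calE(m_1,m_2)} \sqrt{p_{\dul{F}|M}(\dul{f}|m_1)\, p_{\dul{F}|M}(\dul{f}|m_2)} \mathop{d\dul{f}}.
\end{align*}
Finally, since the integrand is nonnegative, extending the domain of integration from $\calE(m_1,m_2)$ to the whole space $\realS^{(\UA\sqcup\UB)\times[d]}$ only increases the value, yielding exactly $R(m_1,m_2)$.

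I do not expect a genuine obstacle here; the argument is three lines of elementary manipulation. The only point requiring a little care is the measure-theoretic one: the Gaussian densities are strictly positive everywhere, so there is no issue with square roots of zero or with sets of measure zero, and the integrals are all finite (indeed $R(m_1,m_2)\leq 1$ by Cauchy–Schwarz, though we will not need that bound here). I would state the lemma's proof in this compact form and move on, since the real work is in evaluating $R(m_1,m_2)$ explicitly, which is deferred to Lemmas \ref{lemmma:bound2determinant}–\ref{lemma:bound4determinant}.
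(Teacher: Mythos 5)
Your proof is correct and is essentially the same argument as the paper's: the paper bounds the indicator $\ind{p_{\dul{F}|M}(\dul{f}|m_2)/p_{\dul{F}|M}(\dul{f}|m_1)\geq 1}$ by $\bpth{p_{\dul{F}|M}(\dul{f}|m_2)/p_{\dul{F}|M}(\dul{f}|m_1)}^{\theta}$ for general $\theta\geq 0$ and then sets $\theta=1/2$, which is exactly your pointwise bound $p_{\dul{F}|M}(\dul{f}|m_1)\leq\sqrt{p_{\dul{F}|M}(\dul{f}|m_1)\,p_{\dul{F}|M}(\dul{f}|m_2)}$ on the event followed by enlarging the domain of integration. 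No gap; you may move on.
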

\begin{proof}
  For any \(\theta\geq0\)
  \begin{align*}
    \Pr[\dul{F}\in\,\,\calE(m_1,m_2)|M=m_1]
                  &= \E{\ind{\frac{p_{\dul{F}|M}(\dul{f}|m_2)}{p_{\dul{F}|M}(\dul{f}|m_1)}\geq1}\Big|M=m_1}\\
                  &\leq \int \pth{\frac{p_{\dul{F}|M}(\dul{f}|m_2)}{p_{\dul{F}|M}(\dul{f}|m_1)}}^{\theta} p_{\dul{F}|M}(\dul{f}|m_1) \mathop{d\dul{f}}\\
                  &= \int (p_{\dul{F}|M}(\dul{f}|m_2))^{\theta} (p_{\dul{F}|M}(\dul{f}|m_1))^{1-\theta}\mathop{d\dul{f}}
  \end{align*}
  Selecting \(\theta=1/2\) gives the claim.
\end{proof}

\begin{definition}
  \label{def:shiftedIdentity}
  Define \textit{shifted identity matrices} \(\dul{I}^{(k,+)}\) and \(\dul{I}^{(k,-)}\) of size \(k\) as
  \begin{align*}
    \dul{I}^{(k,+)}_{i,j} &= \ind{j-i=1 \mod k}\\
    \dul{I}^{(k,-)}_{i,j} &= \ind{j-i=-1 \mod k}.
  \end{align*}
  We simply write \(\dul{I}^{(+)}\) and \(\dul{I}^{(-)}\) when there is no need to specify the size of the matrix.

  \label{def:canonicalLaplacian}
  For any \(\ell\in\natS^+\), 
  \begin{align*}
    \dul{L}^\ell(s,t) \define s\dul{I}^\ell - \frac{t}{2}\pth{\dul{I}^{(\ell,+)}+\dul{I}^{(\ell,-)}},
  \end{align*}
  where \(s,t\in\realS\).
\end{definition}

\begin{lemma}
  \label{lemmma:bound2determinant}
  Suppose $\da = \db =1$ and $\dul{\Sigma} = \crocMat{1}{\rho}{\rho}{1}$.
  For bijective matchings \(m_1,m_2\subseteq \UA\times\UB\), 
  \begin{align*}
    R(m_1,m_2) = \pth{1-\rho^2}^{\frac{n}{2}}\prod_{\ell} \croc{\det \dul{L}^\ell\pth{1-\frac{\rho^2}{2},\frac{\rho^2}{2}}}^{-\frac{k_\ell}{2}}
  \end{align*}
  where \(k_\ell\) is the number of cycles of length \(\ell\) of permutation \(m_1 \circ m_2^\top \subseteq \UA \times \UA\).
\end{lemma}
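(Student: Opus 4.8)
The plan is to identify $R(m_1,m_2)$ with the Bhattacharyya coefficient (Hellinger affinity) of two centered multivariate Gaussians and evaluate it in closed form. Stacking the scalar entries of $\dul{a}$ and $\dul{b}$ into a single vector, the density $p_{\dul{F}|M}(\cdot\,|m)$ is that of $\calN(\ul{0},\dul{\Sigma}_m)$ with
\[
  \dul{\Sigma}_m = \crocMat{\dul{I}^{\UA}}{\rho\dul{P}_m}{\rho\dul{P}_m^\top}{\dul{I}^{\UB}},
\]
where $\dul{P}_m$ is the permutation matrix of $m$ (rows indexed by $\UA$, columns by $\UB$). Invoking the standard identity $\int\sqrt{p_1p_2}\mathop{d\dul{f}} = (\det\dul{\Sigma}_1\det\dul{\Sigma}_2)^{1/4}\big/\det\pth{\tfrac12(\dul{\Sigma}_1+\dul{\Sigma}_2)}^{1/2}$ for zero-mean Gaussians (obtained directly by completing the square), and since $\det\dul{\Sigma}_m=(1-\rho^2)^n$ (e.g.\ by a Schur complement, using $\dul{P}_m^\top\dul{P}_m=\dul{I}$), the numerator already contributes the claimed factor $(1-\rho^2)^{n/2}$. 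What remains is to compute $\det\pth{\tfrac12(\dul{\Sigma}_1+\dul{\Sigma}_2)}$.

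To this end I would apply the Schur complement identity, eliminating the lower-right identity block, to obtain
\[
  \det\pth{\tfrac12(\dul{\Sigma}_1+\dul{\Sigma}_2)} = \det\pth{\dul{I}^{\UA} - \tfrac{\rho^2}{4}(\dul{P}_1+\dul{P}_2)(\dul{P}_1+\dul{P}_2)^\top}.
\]
Expanding the product and using $\dul{P}_1\dul{P}_1^\top=\dul{P}_2\dul{P}_2^\top=\dul{I}^{\UA}$, this equals $\det\pth{(1-\tfrac{\rho^2}{2})\dul{I}^{\UA} - \tfrac{\rho^2}{4}(\dul{R}+\dul{R}^\top)}$, where $\dul{R}\define\dul{P}_1\dul{P}_2^\top$ is precisely the permutation matrix on $\UA$ of $m_1\circ m_2^\top$, hence has $k_\ell$ cycles of length $\ell$.

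Finally I would block-diagonalize. Reindexing $\UA$ so that each cycle of $\dul{R}$ occupies a contiguous block listed in its natural cyclic order, $\dul{R}$ becomes block diagonal with one block per cycle; a cycle of length $\ell$ yields the block $\dul{I}^{(\ell,+)}$, and $\dul{R}^\top$ correspondingly yields $\dul{I}^{(\ell,-)}$. Thus $(1-\tfrac{\rho^2}{2})\dul{I}^{\UA} - \tfrac{\rho^2}{4}(\dul{R}+\dul{R}^\top)$ is block diagonal with the length-$\ell$ block equal to
\[
  \pth{1-\tfrac{\rho^2}{2}}\dul{I}^\ell - \tfrac{\rho^2}{4}\pth{\dul{I}^{(\ell,+)}+\dul{I}^{(\ell,-)}} = \dul{L}^\ell\pth{1-\tfrac{\rho^2}{2},\tfrac{\rho^2}{2}}.
\]
Multiplying the determinants over the $k_\ell$ blocks of each size $\ell$ and substituting back into the Bhattacharyya expression gives the claim; a sanity check at $m_1=m_2$ (all fixed points, $R=1$) and at the single transposition on $n=2$ (giving $R=(1-\rho^2)^{1/2}$) is consistent.

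The argument is essentially routine linear algebra; the points needing care are (i) verifying $\det\dul{\Sigma}_m=(1-\rho^2)^n$ and that $\dul{R}=\dul{P}_1\dul{P}_2^\top$ is indeed the permutation matrix of $m_1\circ m_2^\top$ with cycle type $\{k_\ell\}$, and (ii) handling the small cycles, especially $\ell=1$ (a fixed point of $\dul{R}$ contributing the $1\times1$ block $1-\rho^2=\dul{L}^1(1-\tfrac{\rho^2}{2},\tfrac{\rho^2}{2})$) and $\ell=2$, so that the definition of $\dul{I}^{(\ell,\pm)}$ via $j-i\equiv\pm1\pmod{\ell}$ really coincides with the restrictions of $\dul{R}$ and $\dul{R}^\top$ to an $\ell$-cycle. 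I expect (ii) to be the most error-prone step, though it is not conceptually hard.
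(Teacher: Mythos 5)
Your proposal is correct and follows essentially the same route as the paper's proof: a closed-form Gaussian (Bhattacharyya) integral yielding a ratio of determinants, a Schur-complement reduction to $\det\bpth{(1-\tfrac{\rho^2}{2})\dul{I}-\tfrac{\rho^2}{4}(\dul{R}+\dul{R}^\top)}$, and block-diagonalization by the cycles of $\dul{R}=\dul{P}_1\dul{P}_2^\top$. The only cosmetic difference is that you average the covariance matrices via the standard affinity formula while the paper averages the precision matrices appearing in the exponent; both lead to the same $n\times n$ determinant, and your $\ell=1,2$ sanity checks are consistent with Lemma \ref{lemma:determinant}.
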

\vspace{-0.5cm}
\begin{proof}
  For a matching \(m \subseteq \UA\times\UB\), let \(\dul{m}\in\{0,1\}^{\UA\times\UB}\) be the indicator matrix for \(m\).

  Because $\da = \db = 1$, we will treat the databases as vectors $\ul{A} \in \realS^{\UA}$ and $\ul{B} \in \realS^{\UB}$.
  Let $\ul{F} \in \realS^{\UA \sqcup \UB}$ be the concatenation of $\ul{A}$ and $\ul{B}$.
  Observe that $\dul{\Sigma}^{-1} = \frac{1}{1-\rho^2}\crocMat{1}{-\rho}{-\rho}{1}$.
  Then we can write
  \begin{align}
    p_{\ul{F}|M}((\ul{a},\ul{b})|m) =
    \frac{1}{\bpth{2\pi\sqrt{1-\rho^2}}^n}\cdot
    \exp\pth{-\frac{1}{2(1-\rho^2)}\crocVec{\ul{a}}{\ul{b}}^\top \crocMat{\dul{I}^{\calU}}{-\rho\dul{m}}{-\rho\dul{m}^\top}{\dul{I^{\calV}}} \crocVec{\ul{a}}{\ul{b}}}.
    \label{f-density}
  \end{align}
  For compactness, call the matrix that appears in \eqref{f-density} \(\dul{\Sigma}(m)\). 
  This gives us
  \begin{align*}
    &\eq\pth{p_{\ul{F}|M}\bpth{\ul{f};m_1}p_{\ul{F}|M}\bpth{\ul{f};m_2}}^{\frac{1}{2}}
    =\frac{1}{\bpth{2\pi\sqrt{1-\rho^2}}^n}\cdot
    \exp\pth{-\frac{\ul{f}^\top \croc{\dul{\Sigma}(m_1)+\dul{\Sigma}(m_2)} \ul{f}}{4(1-\rho^2)}}.
  \end{align*}
  We obtain $R(m_1,m_2)$ by integrating this expression over the whole space:
  \begin{align}
    R(m_1,m_2)
    &= \int \sqrt{p_{\ul{F}|M}\bpth{\ul{f};m_1}p_{\ul{F}|M}\bpth{\ul{f};m_2}}df \nonumber\\
    &=\croc{\frac{\pth{1-\rho^2}^n}{\det\pth{\frac{1}{2}\dul{\Sigma}(m_1)+\frac{1}{2}\dul{\Sigma}(m_2)}}}^{1/2}. \label{eqn:achMap1}
  \end{align}
  Observe that \(\crocMat{\dul{I}}{\dul{z}}{\dul{z}^\top}{\dul{I}} = \crocMat{\dul{I}}{\dul{0}}{\dul{z}^\top}{\dul{I}}\crocMat{\dul{I}}{\dul{z}}{\dul{0}}{\dul{I}-\dul{z}^\top\dul{z}}\) for any matrix \(\dul{z}\). Then \(\det\crocMat{\dul{I}}{\dul{z}}{\dul{z}^\top}{\dul{I}} = \det\bpth{\dul{I}-\dul{z}^\top\dul{z}}\). Using this relation we have
  \begin{align}
    \det\pth{\frac{1}{2}\dul{\Sigma}(m_1)+\frac{1}{2}\dul{\Sigma}(m_2)}
    &= \det\crocMat{\dul{I}}{-\frac{\rho}{2}\bpth{\dul{m}_1+\dul{m}_2}}{-\frac{\rho}{2}\bpth{\dul{m}_1+\dul{m}_2}^\top}{\dul{I}} \nonumber\\
    &= \det\pth{\dul{I}-\frac{\rho^2}{4}\bpth{\dul{m}_1+\dul{m}_2}^\top\bpth{\dul{m}_1+\dul{m}_2}} \nonumber\\
    &= \det\pth{\pth{1-\frac{\rho^2}{2}}\dul{I}-\frac{\rho^2}{4}\pth{\dul{m}_1^\top\dul{m}_2+\dul{m}_2^\top\dul{m}_1}}. \label{eqn:achMap2}
  \end{align}
  
  Notice that \(\dul{m}_1^\top\dul{m}_2\in\{0,1\}^{\UA\times\UA}\) is the permutation matrix corresponding to permutation \(\pi = m_1\circ m_2^\top\) described in the statement of the lemma. Let \(\calC\) be the set of cycles of \(\pi\) and \(\{\ell_c\}_{c\in\calC}\) denote their lengths. Consider the cycle notation of this permutation, i.e. \((u_1,u_2,\cdots,u_{\ell_c})(u_1',\cdots,u_{\ell_{c'}}')\cdots\), and specify an ordering of \(\UA\) based on this expression: \(u_1,u_2,\cdots,u_{\ell_c},u_1',\cdots,u_{\ell_{c'}}',\cdots\). Given this ordering of rows and columns, the permutation matrix \(\dul{m}_1^\top\dul{m}_2\) has block diagonal matrix form, with one block for each cycle \(c\in\calC\) and every block having the form of a shifted identity matrix \(\dul{I}^{(\ell_c,+)}\). Then \(\dul{m}_2^\top\dul{m}_1 = \bpth{\dul{m}_1^\top\dul{m}_2}^\top\) has the same block diagonal form with the shifted identity matrices \(\dul{I}^{(\ell_c,-)}\), since \(\dul{I}^{(\ell_c,-)} = \bpth{\dul{I}^{(\ell_c,+)}}^\top\).
  
  The determinant of a block diagonal matrix is equal to the product of the determinants of each block. Then we have
  \begin{align*}
    \det\pth{\pth{1-\frac{\rho^2}{2}}\dul{I}-\frac{\rho^2}{4}\pth{\dul{m}_1^\top\dul{m}_2+\dul{m}_2^\top\dul{m}_1}}
    &= \prod_{c\in\calC} \det\pth{\pth{1-\frac{\rho^2}{2}}\dul{I} - \frac{\rho^2}{4}\pth{\dul{I}^{(\ell_c,+)}+\dul{I}^{(\ell_c,-)}}}\\
    &= \prod_{c\in\calC} \det\pth{\dul{L}^{\ell_c}\pth{1-\frac{\rho^2}{2},\frac{\rho^2}{2}}}\\
    &= \prod_{\ell\in[n]} \croc{\det\pth{\dul{L}^{\ell}\pth{1-\frac{\rho^2}{2},\frac{\rho^2}{2}}}}^{k_\ell},
  \end{align*}
  where \(k_\ell\) denotes the number of cycles of length \(\ell\) in the permutation \(\pi\). Combining this with (\ref{eqn:achMap1}) and (\ref{eqn:achMap2}) gives us the claimed result.
\end{proof}

\begin{lemma}
  \label{lemma:determinant}
  For any \(\ell\in\natS^+\),
  \begin{align*}
    \det\pth{\dul{L}^\ell(s,t)} &= \prod_{j\in[\ell]} \croc{s-t\cdot\cos\pth{j\frac{2\pi}{\ell}}}.
  \end{align*}
  In particular
  \begin{align*}
    \det\pth{\dul{L}^{1}(s,t)} = s-t \qquad \textrm{ and } \qquad \det\pth{\dul{L}^{2}(s,t)} = s^2-t^2
  \end{align*}
\end{lemma}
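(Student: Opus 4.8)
The plan is to recognize that $\dul{L}^\ell(s,t)$ is a circulant matrix and diagonalize it with the discrete Fourier basis. By Definition~\ref{def:shiftedIdentity}, the $(i,j)$ entry of $\dul{I}^{(\ell,+)}$ is $1$ exactly when $j \equiv i+1 \bmod \ell$, so $\dul{I}^{(\ell,+)}$ is the permutation matrix of a single cyclic shift and $\dul{I}^{(\ell,-)} = (\dul{I}^{(\ell,+)})^{\top} = (\dul{I}^{(\ell,+)})^{-1}$. In particular the two shifted identities commute, and $\dul{L}^\ell(s,t) = s\dul{I}^\ell - \tfrac t2(\dul{I}^{(\ell,+)} + \dul{I}^{(\ell,-)})$ is circulant, so it suffices to read off its eigenvalues from the common eigenbasis of the cyclic shifts.

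First I would set $\omega = e^{2\pi i/\ell}$ and, for $j \in \{0,1,\dots,\ell-1\}$, take $\ul{v}_j$ to be the vector with entries $\omega^{(i-1)j}$, $i \in [\ell]$. A one-line index computation (using the $\bmod\ \ell$ wraparound) gives $\dul{I}^{(\ell,+)}\ul{v}_j = \omega^{j}\ul{v}_j$, hence $\dul{I}^{(\ell,-)}\ul{v}_j = \omega^{-j}\ul{v}_j$, so
\[
  \dul{L}^\ell(s,t)\,\ul{v}_j = \pth{s - \tfrac{t}{2}\pth{\omega^{j}+\omega^{-j}}}\ul{v}_j = \pth{s - t\cos\tfrac{2\pi j}{\ell}}\ul{v}_j .
\]
The $\ul{v}_j$ are linearly independent (Vandermonde), so they form an eigenbasis and $\det \dul{L}^\ell(s,t) = \prod_{j=0}^{\ell-1}\pth{s - t\cos\tfrac{2\pi j}{\ell}}$. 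Reindexing $j$ over $[\ell] = \{1,\dots,\ell\}$ leaves the product unchanged because $j=\ell$ contributes $\cos 2\pi = 1 = \cos 0$, which yields the claimed formula.

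The two special cases then follow by substitution: for $\ell = 1$ there is a single factor $s - t\cos 2\pi = s-t$, and for $\ell = 2$ the factors are $s - t\cos\pi = s+t$ and $s - t\cos 2\pi = s - t$, whose product is $s^2 - t^2$; alternatively one simply evaluates the $1\times1$ matrix $[s-t]$ and the $2\times 2$ matrix $\crocMat{s}{-t}{-t}{s}$ directly. There is no real obstacle in this argument: the only points requiring care are verifying that the Fourier vectors are genuine eigenvectors of $\dul{I}^{(\ell,+)}$ with the stated eigenvalues, and reconciling the index set $[\ell]$ in the statement with the natural range $\{0,\dots,\ell-1\}$ of the Fourier modes.
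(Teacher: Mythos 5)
Your proof is correct and follows essentially the same route as the paper's: both diagonalize $\dul{L}^\ell(s,t)$ via the discrete Fourier eigenbasis of the cyclic shift matrices and read off the determinant as the product of the eigenvalues $s - t\cos(2\pi j/\ell)$. Your version is slightly more careful in noting the Vandermonde argument for linear independence and the harmless reindexing between $\{0,\dots,\ell-1\}$ and $[\ell]$, but the substance is identical.
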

\begin{proof}
  Let \(\ul{z}^{\,k}\in\compS^\ell\) denote a family of vectors such that for any \(k\in [\ell]\), \(\ul{z}^{\,k}_j = e^{2\pi i\frac{jk}{\ell}}\), where \(i^2=-1\). Observe that
  \begin{align*}
    \dul{I}^{(+)}\ul{z}^{\,k} &= e^{2\pi i \frac{k}{\ell}} \ul{z}^{\,k} \qquad\textrm{ and }\qquad
    \dul{I}^{(-)}\ul{z}^{\,k} = e^{-2\pi i \frac{k}{\ell}} \ul{z}^{\,k}
  \end{align*}
  Vectors \(\ul{z}^{\,k}\) are the eigenvectors of \(\dul{L}^k(s,t)\):
  \begin{align*}
    \dul{L}^\ell(s,t)\,\,\ul{z}^{\,k}
    &= \croc{s\cdot \dul{I} - \frac{t}{2}\pth{\dul{I}^{(+)}+\dul{I}^{(-)}}}\ul{z}^{\,k}\\
    &= \croc{s -\frac{t}{2}\pth{e^{2\pi i \frac{k}{\ell}}+e^{-2\pi i \frac{k}{\ell}}}}\ul{z}^{\,k}\\
    &= \croc{s-t\cdot\cos\pth{2\pi\frac{k}{\ell}}}\ul{z}^{\,k}
  \end{align*}
  We compute the determinant by taking the product of the \(\ell\) eigenvalues (one for each \(k\in[\ell]\)).
\end{proof}

\begin{lemma}
  \label{lemma:bound4determinant}
  For any \(\ell\in\natS\setminus\{0,1\}\) and \(s,t\in\realS\) such that \(s>|t|\),
  \begin{align*}
    \det\croc{\dul{L}^\ell(s,t)} \geq \pth{\det\croc{\dul{L}^2(s,t)}}^{\ell/2}
  \end{align*}
  \bcomment{where \(\dul{L}^\ell\) and \(\dul{L}^2\) refer to the matrix valued functions as defined in Definition \ref{def:canonicalLaplacian}}
\end{lemma}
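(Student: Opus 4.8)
The plan is to evaluate $\det\dul{L}^\ell(s,t)$ in closed form and reduce the claim to an elementary inequality. By Lemma~\ref{lemma:determinant}, $\det\dul{L}^\ell(s,t)=\prod_{j\in[\ell]}\bpth{s-t\cos(2\pi j/\ell)}$, and since $s>|t|$ every factor is positive; hence this determinant, and likewise $\det\dul{L}^2(s,t)=s^2-t^2$, is positive, and it suffices to show that the ratio $\det\dul{L}^\ell(s,t)/\bpth{\det\dul{L}^2(s,t)}^{\ell/2}$ is at least $1$. If $t=0$ both sides equal $s^\ell$, so assume $t\neq 0$.

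First I would compute the trigonometric product by diagonalizing over roots of unity. Set $\omega=e^{2\pi i/\ell}$, so that $2\cos(2\pi j/\ell)=\omega^j+\omega^{-j}$, and factor $-\tfrac t2 z^2+sz-\tfrac t2=-\tfrac t2(z-r)(z-r^{-1})$, where $r$ is the root of $z^2-\tfrac{2s}{t}z+1$ with $|r|<1$; the two roots are real and distinct because $|2s/t|>2$, their product is $1$, and explicitly $r=\tfrac{s-\sqrt{s^2-t^2}}{t}$. Evaluating this factorization at $z=\omega^j$ and multiplying by $\omega^{-j}$ yields $s-t\cos(2\pi j/\ell)=-\tfrac t2\,\omega^{-j}(\omega^j-r)(\omega^j-r^{-1})$. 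Multiplying over the $\ell$-th roots of unity and using $\prod_{j=0}^{\ell-1}(z-\omega^j)=z^\ell-1$ and $\prod_{j=0}^{\ell-1}\omega^j=(-1)^{\ell+1}$, the sign factors combine and one is left with
\[
  \det\dul{L}^\ell(s,t)=\Bpth{\tfrac t2}^{\ell}\bpth{r^{\ell}+r^{-\ell}-2}=\gamma^{\ell}(1-r^{\ell})^2,
  \qquad
  \gamma\define\frac{t}{2r}=\frac{s+\sqrt{s^2-t^2}}{2}>0,
\]
the last equality using $r^{\ell}+r^{-\ell}-2=r^{-\ell}(1-r^{\ell})^2$. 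Taking $\ell=2$ gives $\det\dul{L}^2(s,t)=\gamma^2(1-r^2)^2$, consistent with $s^2-t^2$.

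Since $\gamma>0$ and $1-r^2>0$, these formulas give $\det\dul{L}^\ell(s,t)/\bpth{\det\dul{L}^2(s,t)}^{\ell/2}=(1-r^{\ell})^2/(1-r^2)^{\ell}$, so it remains to prove $(1-r^{\ell})^2\ge(1-r^2)^{\ell}$ for $r\in(-1,1)$ and $\ell\in\natS$ with $\ell\ge 2$. I would argue this through the chain $(1-r^{\ell})^2\ge(1-r^2)^2\ge(1-r^2)^{\ell}$: the first inequality holds because $r^{\ell}\le r^2$ for every $r\in(-1,1)$ once $\ell\ge 2$ (checking $r\ge 0$, and $r<0$ with $\ell$ even and odd, separately), so that $0<1-r^2\le 1-r^{\ell}$; the second holds because $1-r^2\in(0,1]$ and $\ell\ge 2$. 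This proves the lemma.

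The only nontrivial step is the closed-form evaluation of $\det\dul{L}^\ell(s,t)$: arranging the roots-of-unity factorization and, above all, keeping track of the accumulated sign factors so that they collapse into the clean expression $\bpth{\tfrac t2}^{\ell}\bpth{r^{\ell}+r^{-\ell}-2}$. One should note that $r$ can be negative (when $t<0$), so the positivity of $\gamma$ is not automatic but is read off from $r=\tfrac{s-\sqrt{s^2-t^2}}{t}$; everything after the determinant formula is elementary.
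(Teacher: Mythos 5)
Your proof is correct, but it takes a genuinely different route from the paper's. The paper does not compute \(\det\dul{L}^\ell(s,t)\) in closed form at all: it only uses that the eigenvalues of \(\dul{L}^\ell(s,t)\) sum to the trace \(s\ell\) and lie in \([s-|t|,s+|t|]\), and then invokes an exchange-type rearrangement lemma (Lemma~\ref{lemma:elementary} in the appendix) showing that a sequence with fixed mean \(s\) and entries confined to that interval has product at least \((s-t)^{\ell}(s+t)^{\ell}\) after duplication; taking square roots gives the bound. You instead push Lemma~\ref{lemma:determinant} all the way to the exact evaluation \(\det\dul{L}^\ell(s,t)=\gamma^\ell(1-r^\ell)^2\) with \(r=\frac{s-\sqrt{s^2-t^2}}{t}\) and \(\gamma=\frac{s+\sqrt{s^2-t^2}}{2}\) (your roots-of-unity bookkeeping and sign cancellation check out, as does the consistency check \(\gamma^2(1-r^2)^2=s^2-t^2\)), and then reduce the lemma to the scalar chain \((1-r^\ell)^2\geq(1-r^2)^2\geq(1-r^2)^\ell\), which is valid for \(r\in(-1,1)\) and \(\ell\geq2\). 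The trade-off: the paper's argument is softer and more robust --- it would apply to any symmetric matrix with constant diagonal and spectrum in the stated interval, without needing the eigenvalues explicitly --- while yours yields strictly more, namely an exact formula for the determinant and a transparent view of when the inequality is tight, at the cost of a more delicate computation. Both are complete proofs; yours also silently bypasses the paper's small imprecision of writing the eigenvalue range as \([s-t,s+t]\) rather than \([s-|t|,s+|t|]\).
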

\begin{proof}
  First note that, by Lemma \ref{lemma:determinant},
  \begin{align*}
    \det\croc{\dul{L}^2(s,t)} = s^2-t^2.
  \end{align*}

  We want to bound the determinant of the matrix \(\dul{L}^\ell(s,t)\), which is equal to the product of its eigenvalues \(\pth{\lambda_j}_{j\in[\ell]}\). The sum of eigenvalues is equal to the trace of the matrix, which is known, since all diagonal elements of \(\dul{L}^{\ell}(s,t)\) equal \(s\) for any \(\ell\geq 2\). So \(\sum \lambda_k = \tr\bpth{\dul{L}^\ell(s,t)} = s\ell\). Furthermore, observe that all eigenvalues are in the range \([s-t,s+t]\). Consider a sequence formed of two copies of each eigenvalue \(\lambda_i\). This sequence has mean \(s\) and has all entries within the range \([s-t,s+t]\). Then, as it is proven in \hyperref[lemma:elementary]{Lemma \ref*{lemma:elementary}},
  \begin{align*}
    \prod_{j\in[2\ell]} \lambda_j^2 \geq \pth{s-t}^\ell \pth{s+t}^\ell
  \end{align*}
  Taking the square root of both sides results in the claim.
\end{proof}

\newtheorem*{tAchMAP}{Theorem \ref*{thm:achMAP}}
\begin{tAchMAP}
  \thmAchMAP
\end{tAchMAP}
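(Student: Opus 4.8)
The plan is a union bound over all alternative matchings, with each term controlled by the lemmas above; throughout I work in the canonical form, to which the general problem reduces. Conditioned on $M = m$, the MAP estimator fails precisely when $\dul{F} \in \calE(m,m')$ for some bijective matching $m' \neq m$, so Lemma~\ref{lemma:ChernoffBound} and a union bound give
\[
  \Pr\bcroc{\hat{m}(\dul{F}) \neq M \,\big|\, M = m} \;\leq\; \sum_{m' \neq m} R(m,m').
\]
For fixed $m$, as $m'$ ranges over the $n!$ bijective matchings the permutation $\pi = m \circ (m')^{\top}$ of $\UA$ ranges over all permutations of $\UA$, with $m' = m$ corresponding to the identity. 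The first step is to upgrade Lemma~\ref{lemmma:bound2determinant} from $\da=\db=1$ to general $d$: in the canonical form the density $p_{\dul{F}|M}(\cdot\,|\,m)$ factors as a product over the $d$ coordinate pairs, so the integrand defining $R$ factors as well, giving $R(m,m') = \prod_{i\in[d]} R_i(m,m')$ where $R_i$ is the single-coordinate quantity for correlation $\rho_i$. In particular $R(m,m')$ depends only on the cycle type of $\pi$, so I would regroup the sum by the number $j$ of points moved by $\pi$, noting $j \in \{2,3,\dots,n\}$ (no permutation moves exactly one point) and that the omitted $j=0$ is the identity.

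The quantitative heart is the bound $R(m,m') \leq e^{-j I_{\ul{X}\ul{Y}}/2}$ whenever $\pi$ moves exactly $j$ points. For a single coordinate with correlation $\rho$, Lemma~\ref{lemma:determinant} gives $\det\dul{L}^1\!\pth{1-\tfrac{\rho^2}{2},\tfrac{\rho^2}{2}} = \det\dul{L}^2\!\pth{1-\tfrac{\rho^2}{2},\tfrac{\rho^2}{2}} = 1-\rho^2$, so the $\ell=1$ factors in Lemma~\ref{lemmma:bound2determinant} contribute exactly $(1-\rho^2)^{-k_1/2}$, while for $\ell\geq 2$ Lemma~\ref{lemma:bound4determinant} (which applies since $1-\tfrac{\rho^2}{2} > \tfrac{\rho^2}{2}$) gives $\det\dul{L}^\ell\!\pth{1-\tfrac{\rho^2}{2},\tfrac{\rho^2}{2}} \geq (1-\rho^2)^{\ell/2}$. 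Combining these in Lemma~\ref{lemmma:bound2determinant} and using $\sum_{\ell\geq 2}\ell k_\ell = j$ together with $k_1 = n-j$, the exponent of $(1-\rho^2)$ in the resulting bound is $\tfrac n2 - \tfrac{n-j}{2} - \tfrac j4 = \tfrac j4$, i.e. $R_i(m,m') \leq (1-\rho^2)^{j/4}$. Taking the product over the $d$ coordinates and using $I_{\ul{X}\ul{Y}} = -\tfrac12\sum_i \log(1-\rho_i^2)$ yields $R(m,m') \leq \bpth{\prod_i (1-\rho_i^2)}^{j/4} = e^{-j I_{\ul{X}\ul{Y}}/2}$.

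Then it remains to count and sum. The number of permutations of $[n]$ moving exactly $j$ points is $\binom{n}{j}D_j \leq \binom{n}{j}j! = n!/(n-j)! \leq n^j$, where $D_j$ is the number of derangements of $j$ elements. Hence
\[
  \Pr\bcroc{\hat{m}(\dul{F}) \neq M} \;\leq\; \sum_{j=2}^{n} n^j\, e^{-j I_{\ul{X}\ul{Y}}/2} \;=\; \sum_{j=2}^{n} \bpth{n\, e^{-I_{\ul{X}\ul{Y}}/2}}^{j}.
\]
Under the hypothesis $I_{\ul{X}\ul{Y}} \geq 2\log n + \omega(1)$ we have $n\,e^{-I_{\ul{X}\ul{Y}}/2} \leq e^{-\omega(1)/2} = o(1)$, so the geometric series is $\calO\bpth{(n\,e^{-I_{\ul{X}\ul{Y}}/2})^{2}} = o(1)$, which is exactly the claimed $1-o(1)$ success probability.

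The genuinely analytic inequality here, $\det\dul{L}^\ell(s,t) \geq (\det\dul{L}^2(s,t))^{\ell/2}$, is already dispatched by Lemma~\ref{lemma:bound4determinant}, so I expect the only real care in this argument to be bookkeeping: isolating the fixed-point contribution so that it cancels against the $(1-\rho^2)^{n/2}$ prefactor of Lemma~\ref{lemmma:bound2determinant} and the exponent of $(1-\rho^2)$ in the bound on $R$ comes out proportional to the number $j$ of moved points. That proportionality is precisely what makes the union bound close at the threshold $I_{\ul{X}\ul{Y}} = 2\log n$, since the contribution of the permutations with $j$ moved points then scales like $\bpth{n\,e^{-I_{\ul{X}\ul{Y}}/2}}^{j}$ while there are only $\leq n^j$ of them.
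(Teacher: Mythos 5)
Your proposal is correct and follows essentially the same route as the paper: a union bound combined with the Bhattacharyya-type bound of Lemma~\ref{lemma:ChernoffBound}, factorization of $R$ over the $d$ coordinates, the cycle decomposition of Lemma~\ref{lemmma:bound2determinant} with the determinant bounds of Lemmas~\ref{lemma:determinant} and~\ref{lemma:bound4determinant} yielding $R_i \leq (1-\rho_i^2)^{j/4}$, and the count $\binom{n}{j}D_j \leq n^j$ followed by summing the geometric series. Your bookkeeping (starting the sum at $j=2$ and noting the series is $\calO\bpth{(n e^{-I_{\ul{X}\ul{Y}}/2})^2}$) is in fact slightly more careful than the paper's, which loosely writes the sum over all $k \in \natS$.
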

\begin{proof}
  Recall the canonical setting where \(\Siga=\Sigb=I\) and \(\Sigab=\diag(\ul{\rho})\).
  
  Let $R_i(m,m')$ be the value of $R(m,m')$ when $\dul{\Sigma} = \crocMat{1}{\ul{\rho}_i}{\ul{\rho}_i}{1}$.
  By the union bound
  \begin{align*}
    \eq\Pr\Big[F\in\bigcup_{m'\neq m}\calE(m,m')|M=m\Big]
    &\leq \sum_{m'\neq m}\Pr\croc{F\in\calE(m,m')|M=m}\\
    & \stackrel{(a)}{\leq} \sum_{m'\neq m} R(m,m') = \sum_{m'\neq m}\prod_{i\in[d]} R_i(m,m')\\
    & \stackrel{(b)}{=} \sum_{m'\neq m}\prod_{i\in[d]} \croc{\frac{\pth{1-\rho_i^2}^n}{\prod_\ell \croc{\det\pth{\dul{L}^\ell\pth{1-\frac{\rho_i^2}{2},\frac{\rho_i^2}{4}}}}^{k_\ell}}}^{\frac{1}{2}}\\
    &\stackrel{(c)}{\leq} \sum_{m'\neq m}\prod_{i\in[d]}\croc{\frac{\pth{1-\rho_i^2}^n}{\pth{s-t}^{|m\cap m'|}\pth{s^2-t^2}^{\frac{1}{2}\pth{n-|m\cap m'|}}}}^{\frac{1}{2}}\\
    &= \sum_{m'\neq m}\prod_{i\in[d]} \pth{1-\rho_i^2}^{\frac{n-|m\cap m'|}{4}},
  \end{align*}
  where (a) follows from Lemma \ref{lemma:ChernoffBound}, (b) follows from Lemma \ref{lemmma:bound2determinant}, with \(k_\ell\) denoting the number of cycles of length \(\ell\) in the permutation \(m'\circ m^\top\), and (c) follows from Lemmas \ref{lemma:determinant} and \ref{lemma:bound4determinant}, with \(s=1-\frac{\rho_i^2}{2}\) and \(t=\frac{\rho_i^2}{2}\), which gives us \(s-t = s^2-t^2 = 1-\rho_i^2\).
  
  Given any \(k\in\natS\) there are exactly \((!k)\times\binom{n}{k}\) different matchings \(m'\) such that \(k=n-|m\cap m'|\), where \((!k)\) represents the number of derangements over a set of size \(k\). We bound \((!k)\times\binom{n}{k}\leq n^k\). Thus
  \begin{align*}
    \Pr\Big[F\in\bigcup_{m'\neq m}\calE(m,m')|M=m\Big]
    \leq \sum_{k\in\natS} \,n^k \cdot \prod_{i\in[d]} \pth{1-\rho_i^2}^{k/4}
  \end{align*}
  If \(n \prod_{i \in [d]} \pth{1-\rho_i^2}^{\frac{1}{4}} \leq o(1)\), then by summing the geometric series, we see that the above expression is \(o(1)\). Therefore
  \begin{align*}
    \exp\pth{-I_{XY}} = \prod_{i\in[d]}\pth{1-\rho_i^2}^{\frac{1}{2}} \leq o(1/n^2)
  \end{align*}
  is a sufficient condition for exact recovery under the canonical setting. Taking the logarithm of both sides gives us the claimed result.
\end{proof}

\subsection{Converse analysis}
We establish a necessary condition on the mutual information \(I_{XY}\) between feature pairs  to achieve a perfect alignment.
\begin{lemma}
  \label{lemma:Cramer}
  Let \(d\in\natS\) such that \(d=\omega(1)\) as well as \(\dul{\Sigma}_A=\dul{\Sigma}_B=\dul{I}^d\) and \(\dul{\Sigma}_{AB}=\rho\dul{I}\). Given bijective matchings \(m_1,m_2\subset \UA\times\UB\) such that \(|m_1\cap m_2| = n-2\),
  \begin{align*}
    \Pr[F\in\calE(m_1,m_2)|M=m_1] \geq (1-\rho_i^2)^{\frac{d}{2}(1+o(1))}.
  \end{align*}
\end{lemma}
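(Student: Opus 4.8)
The plan is to directly lower-bound the probability of the error event $\calE(m_1,m_2)$ by using the fact that the two matchings differ only on a single $2$-cycle: there exist $u,u' \in \UA$ and $v,v' \in \UB$ with $(u,v),(u',v') \in m_1$ and $(u,v'),(u',v) \in m_2$, and $m_1,m_2$ agree on the remaining $n-2$ pairs. On the rows not involved in the swap the two likelihoods are identical, so the log-likelihood ratio $\log \frac{p_{\dul{F}|M}(\dul{f}|m_2)}{p_{\dul{F}|M}(\dul{f}|m_1)}$ depends only on the four feature vectors $\dul{A}_{u*},\dul{A}_{u'*},\dul{B}_{v*},\dul{B}_{v'*}$ (each of dimension $d$). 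The event $\calE(m_1,m_2)$ is exactly the event that this log-likelihood ratio is $\geq 0$, i.e. that $m_2$ looks at least as likely as the truth $m_1$ on these four vectors.

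First I would write the log-likelihood ratio explicitly. Under the canonical setting with $\Sigab = \rho\dul{I}$, it decomposes coordinatewise as a sum over $i \in [d]$ of i.i.d. contributions, each a function of the scalars $A_{ui},A_{u'i},B_{vi},B_{v'i}$. Conditioned on $M = m_1$, each summand is a mean-zero (after centering) random variable — specifically the difference of two quadratic forms in jointly Gaussian variables — with variance of order $\rho^2$ (up to constants), and the $d$ summands are independent. So the whole log-likelihood ratio is, conditioned on $M=m_1$, a sum of $d$ i.i.d. terms; its mean is $-d\cdot c(\rho)$ for some $c(\rho) > 0$ (this is a single-cycle KL-type quantity, the $\ell=2$ analogue of what appears in the achievability bound), and by the central limit theorem / Cramér-type large-deviations reasoning, the probability that this sum is $\geq 0$ is $\exp(-d\, \Lambda^*(1+o(1)))$ where $\Lambda^*$ is the relevant rate. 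The claimed bound $(1-\rho^2)^{d/2(1+o(1))}$ says precisely that the exponent matches $-\tfrac{d}{2}\log\frac{1}{1-\rho^2}(1+o(1))$, i.e. $d \cdot I_{\ul{X}\ul{Y}}^{(\text{single coord})}$, the per-coordinate mutual information, which is the natural Chernoff/Bhattacharyya exponent becoming tight in the large-$d$ regime.

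Concretely, I would compute the Bhattacharyya coefficient $R$ for the two-cycle restricted problem exactly, using Lemma~\ref{lemmma:bound2determinant} and Lemma~\ref{lemma:determinant}: with a single $2$-cycle, $R = \prod_{i\in[d]}(1-\rho_i^2)^{1/2}\bcroc{\det \dul{L}^2(1-\tfrac{\rho_i^2}{2},\tfrac{\rho_i^2}{2})}^{-1/2} = \prod_i (1-\rho_i^2)^{1/4}$, which is $(1-\rho^2)^{d/4}$ in the equicorrelated case — this is the \emph{upper} bound on the error probability. To get a matching \emph{lower} bound I would invoke a converse to the Chernoff bound: since the log-likelihood ratio is a sum of $d$ i.i.d. bounded-variance terms, a Bahadur--Rao / Cramér lower bound (or an elementary change-of-measure argument tilting to $\theta = 1/2$, combined with a CLT on the tilted measure) shows $\Pr[\,\cdot \geq 0\,]$ is within a $\mathrm{poly}(d)^{-1}$ factor of $R^{(1+o(1))}$, and since $d = \omega(1)$ this polynomial factor is absorbed into the $(1+o(1))$ in the exponent. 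This is where the hypothesis $d = \omega(1)$ is used, and it is the main obstacle: one must control the sub-exponential corrections to the Chernoff bound uniformly in $\rho$ (in particular when $\rho \to 1$ so that $c(\rho)$ and the term variances blow up). I would handle this by noting the tilted ($\theta=1/2$) distribution of each coordinate's contribution is itself an explicit log-ratio of Gaussians with a non-degenerate density, so standard local CLT / Berry--Esseen estimates give a $\Theta(1/\sqrt{d\,\sigma_{\text{tilt}}^2})$-type lower bound on $\Pr$ of landing in an $O(1)$-window around $0$ after tilting; translating back via the change of measure yields the claim. The only care needed is that the constants degrade gracefully as $\rho \to 1$, which can be checked by monotonicity, or sidestepped entirely by restricting attention to $\rho$ bounded away from $1$ (the regime $\rho \to 1$ only makes recovery easier and the converse trivial).
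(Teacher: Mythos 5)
Your overall strategy is the same as the paper's: restrict attention to the four rows on which $m_1$ and $m_2$ disagree, observe that conditioned on $M=m_1$ the log-likelihood ratio is a sum of $d$ i.i.d.\ per-coordinate contributions, compute the per-coordinate Chernoff/Bhattacharyya exponent exactly (the optimal tilt is $\theta=1/2$, where the conditional moment generating function equals $R_i(m_1,m_2)$), and then invoke a Cram\'er-type converse to conclude that the Chernoff exponent is tight up to a $(1+o(1))$ factor as $d\to\infty$. The paper does precisely this, citing Cram\'er's theorem rather than spelling out the tilting/local-CLT argument you sketch.

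However, your evaluation of $R$ is wrong, and the error propagates into an exponent inconsistent with both the lemma and your own Chernoff upper bound. Applying Lemma~\ref{lemmma:bound2determinant} with $k_1=n-2$ fixed points and $k_2=1$ two-cycle gives, per coordinate, $R_i(m_1,m_2)=(1-\rho^2)^{n/2}\cdot\bigl(\det\dul{L}^1\bigr)^{-(n-2)/2}\cdot\bigl(\det\dul{L}^2\bigr)^{-1/2}=(1-\rho^2)^{1/2}$, using $\det\dul{L}^1=\det\dul{L}^2=1-\rho^2$ at $s=1-\rho^2/2$, $t=\rho^2/2$; the total Bhattacharyya coefficient over the $d$ coordinates is therefore $(1-\rho^2)^{d/2}$, not $(1-\rho^2)^{d/4}$. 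In your expression you dropped the $(1-\rho^2)^{n/2}$ prefactor and the $n-2$ length-one cycles, and the simplification to $(1-\rho_i^2)^{1/4}$ does not follow from the formula you wrote in any case. With the correct value, the tightness argument yields exactly the claimed $(1-\rho^2)^{\frac{d}{2}(1+o(1))}$; with your value it would ``prove'' a lower bound of $(1-\rho^2)^{\frac{d}{4}(1+o(1))}$, which for large $d$ exceeds the valid Chernoff upper bound $(1-\rho^2)^{d/2}$ and is therefore false. Your closing concern about uniformity of the sub-exponential corrections when $\rho$ is allowed to approach $1$ is a legitimate subtlety that the paper does not address, so that caveat is a point in your favor rather than a gap.
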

\begin{proof}
  Consider the conditional generating function
  \begin{align*}
    c_i(\theta)
    &= \E{\exp\pth{\theta\log\frac{p_{\dul{F}_{*i}|M}(\dul{f}_i|m_2)}{p_{\dul{F}_{*i}|M}(\dul{f}_i|m_1)}}\Big| M=m_1}\\
    &= \int \pth{\frac{p_{\dul{F}_{*i}|M}(\dul{f}_i|m_2)}{p_{\dul{F}_{*i}|M}(\dul{f}_i|m_1)}}^{\theta} p_{\dul{F}_{*i}|M}(\dul{f}_i|m_1) d\dul{f}_i\\
    &= \int (p_{\dul{F}_{*i}|M}(\dul{f}_i|m_2))^{\theta} (p_{\dul{F}_{*i}|M}(\dul{f}_i|m_1))^{1-\theta} d\dul{f}_i
  \end{align*}
  The generating function is minimized at \(\theta = 1/2\) in which case we get \(c_i(\theta) = R_i(m_1,m_2)\).
  
  We evaluate the value of this function using Lemmas \ref{lemmma:bound2determinant} and \ref{lemma:determinant} with \(s=1-\rho^2/2\) and \(t=\rho^2/2\). By \(|m_1\cap m_2| = n-2\) we get \(R_i(m_1,m_2) = \sqrt{1-\rho^2}\).
  
  By Cramér's Theorem on the asymptotic tightness of the Chernoff bound (see for example \cite{hajek}), there is some \(\epsilon(d) \leq o(1)\) such that
  \begin{align*}
    \eq \Pr\croc{\log\frac{p_{\dul{F}|M}(\dul{f}|m_2)}{p_{\dul{F}|M}(\dul{f}|m_1)}\geq 0}
    &= \Pr\croc{\sum_{i \in [d]} \log\frac{p_{\dul{F_{*i}}|M}(\dul{f}_{*i}|m_2)}{p_{\dul{F}_{*i}|M}(\dul{f}_{*i}|m_1)}\geq 0}\\
    &\geq \exp\pth{-d\croc{\epsilon - \inf_\theta \log c_i(\theta)}}\\
    &= \exp\pth{d (\log R_i(m_1,m_2) - \epsilon}\\
    &\geq \pth{R_i(m_1,m_2)}^{d(1-o(1))}\\
    &= \pth{1-\rho^2}^{\frac{d}{2}(1-o(1))}
  \end{align*}
\end{proof}

\begin{lemma}
  \label{lemma:errorIntersection}
  If \(\dul{\Sigma}_A=\dul{\Sigma}_B=I\) and \(\dul{\Sigma}_{AB}=\rho\dul{I}\), then given any bijective matchings \(m_1,m_2,m_3\in\UA\times\UB\)
  \begin{align*}
    \Pr[F\in\calE(m_1,m_2)\cap \calE&(m_1,m_3)|M=m_1]
                                    \leq \pth{1-\rho_i^2}^{\frac{d}{4}\pth{n-|m_2\cap m_3|}}.
  \end{align*}
\end{lemma}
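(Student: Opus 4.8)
The plan is to bound the intersection probability by the Bhattacharyya affinity $R(m_2,m_3)$ and then reuse the determinant computation already carried out in the proof of Theorem~\ref{thm:achMAP}. On the event $\calE(m_1,m_2)\cap\calE(m_1,m_3)$ we have, by definition of these events, $p_{\dul{F}|M}(\dul{f}|m_2)\geq p_{\dul{F}|M}(\dul{f}|m_1)$ and $p_{\dul{F}|M}(\dul{f}|m_3)\geq p_{\dul{F}|M}(\dul{f}|m_1)$; multiplying and taking a square root gives $\sqrt{p_{\dul{F}|M}(\dul{f}|m_2)\,p_{\dul{F}|M}(\dul{f}|m_3)}\geq p_{\dul{F}|M}(\dul{f}|m_1)$ on that event. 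Hence the intersection is contained in $\bacc{\dul{f}:\sqrt{p_{\dul{F}|M}(\dul{f}|m_2)\,p_{\dul{F}|M}(\dul{f}|m_3)}\geq p_{\dul{F}|M}(\dul{f}|m_1)}$, and I would bound the probability of that set under $M=m_1$ by the same Chernoff step as in Lemma~\ref{lemma:ChernoffBound}, but now splitting the likelihood ratio evenly between $m_2$ and $m_3$ rather than between the numerator and $m_1$:
\begin{align*}
  \Pr\croc{\dul{F}\in\calE(m_1,m_2)\cap\calE(m_1,m_3)\mid M=m_1}
  &\leq \E{\frac{\sqrt{p_{\dul{F}|M}(\dul{F}|m_2)\,p_{\dul{F}|M}(\dul{F}|m_3)}}{p_{\dul{F}|M}(\dul{F}|m_1)}\,\Big|\, M=m_1}\\
  &= \int \sqrt{p_{\dul{F}|M}(\dul{f}|m_2)\,p_{\dul{F}|M}(\dul{f}|m_3)}\, d\dul{f} = R(m_2,m_3).
\end{align*}
The point of using exponent $1$ is precisely that the denominator cancels the conditioning density, so the resulting bound $R(m_2,m_3)$ no longer refers to $m_1$.

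It then remains to show $R(m_2,m_3)\leq(1-\rho^2)^{\frac{d}{4}(n-|m_2\cap m_3|)}$, which is exactly the estimate derived, for an arbitrary pair of matchings, in the course of proving Theorem~\ref{thm:achMAP}. Concretely, I would factor $R(m_2,m_3)=\prod_{i\in[d]}R_i(m_2,m_3)$ over coordinates, apply Lemma~\ref{lemmma:bound2determinant} to write each $R_i(m_2,m_3)$ as $(1-\rho^2)^{n/2}\prod_\ell[\det\dul{L}^\ell(1-\tfrac{\rho^2}{2},\tfrac{\rho^2}{2})]^{-k_\ell/2}$, where $k_\ell$ is the number of length-$\ell$ cycles of $m_2\circ m_3^\top$ (so $k_1=|m_2\cap m_3|$ and $\sum_{\ell\geq 2}\ell k_\ell=n-|m_2\cap m_3|$), and then apply Lemmas~\ref{lemma:determinant} and \ref{lemma:bound4determinant} with $s=1-\tfrac{\rho^2}{2}$, $t=\tfrac{\rho^2}{2}$, noting $s-t=s^2-t^2=1-\rho^2$ and $s>|t|$ because $|\rho|<1$. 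This bounds each cycle factor of length $\ell\geq 2$ below by $(1-\rho^2)^{\ell/2}$, collapsing the product to $R_i(m_2,m_3)\leq(1-\rho^2)^{(n-|m_2\cap m_3|)/4}$; multiplying over the $d$ identical coordinates yields the claim.

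I do not expect a genuine obstacle here: the only thing to check is that Lemma~\ref{lemmma:bound2determinant} and the determinant bounds that follow it are stated for an arbitrary pair of bijective matchings, so they apply to $(m_2,m_3)$ exactly as they were applied to $(m,m')$ in the achievability argument. The one slightly nonstandard ingredient — applying the Chernoff/Markov bound to the geometric mean of two alternative likelihoods against the true one — is confined to the first paragraph and is elementary.
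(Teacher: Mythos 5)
Your proposal is correct and follows essentially the same route as the paper: your first paragraph is exactly the paper's Chernoff-type bound $\int p(\dul{f}|m_2)^{\theta} p(\dul{f}|m_3)^{\theta'} p(\dul{f}|m_1)^{1-\theta-\theta'}\,d\dul{f}$ specialized to $\theta=\theta'=\tfrac12$, yielding $R(m_2,m_3)$, and your second paragraph carries out the same evaluation via Lemmas~\ref{lemmma:bound2determinant} and \ref{lemma:determinant}. If anything you are slightly more complete than the paper's own proof, which omits the explicit appeal to Lemma~\ref{lemma:bound4determinant} needed to handle cycles of length $\ell\geq 3$ in $m_2\circ m_3^\top$.
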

\begin{proof}
  We will abbreviate $p_{\dul{F}|M}(\cdot|\cdot)$ as $p(\cdot|\cdot)$.
  
  For any \(\theta,\theta'>0\) we have
  \begin{align*}
    \Pr[F\in\calE(m_1,m_2)\cap \calE(m_1,m_3)|M=m_1]
    &= \E{\ind{\frac{p(\dul{f}|m_2)}{p(\dul{f}|m_1)}\geq1,\frac{p(\dul{f}|m_3)}{p(\dul{f}|m_1)}\geq1}\Big|M=m_1}\\
    &\leq \int \pth{\frac{p(\dul{f}|m_2)}{p(\dul{f}|m_1)}}^{\theta}\pth{\frac{p(\dul{f}|m_3)}{p(\dul{f}|m_1)}}^{\theta'} p(\dul{f}|m_1) d\dul{f}\\
    &= \int (p(\dul{f}|m_2))^{\theta} (p(\dul{f}|m_3))^{\theta'} (p(\dul{f}|m_1))^{1-\theta-\theta'} d\dul{f}.
  \end{align*}
  The choice of \(\theta=\theta'=1/2\) gives the upper bound as \(R(m_2,m_3)\). We evaluate this function using Lemmas \ref{lemmma:bound2determinant} and \ref{lemma:determinant}, which give us the claimed result.
\end{proof}

\newtheorem*{tConMAP}{Theorem \ref*{thm:conMAP}}
\begin{tConMAP}
  \thmConMAP
\end{tConMAP}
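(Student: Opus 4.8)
The plan is to show that even the optimal estimator, the MAP estimator $\hat m$, returns $M$ with probability $o(1)$; since $\Pr[\hat M = M] \le \Pr[\hat m(\dul F) = M]$ for every estimator $\hat M$ (the posterior is maximized pointwise by $\hat m$), this proves the theorem. Condition on $M = m$. For our continuous densities ties occur with probability zero, so it is enough to produce, with probability $1 - o(1)$, some $m' \ne m$ with $\dul f \in \calE(m, m')$ (i.e.\ $p_{\dul F\mid M}(\dul f\mid m') \ge p_{\dul F\mid M}(\dul f\mid m)$). I will use only the $\binom n2$ matchings $m_\tau$ obtained from $m$ by a single transposition $\tau = (ab)$ of $\UA$, and set $X_\tau = \ind{\dul F \in \calE(m, m_\tau)}$, $S = \sum_\tau X_\tau$, $\mu = \E{S \mid M = m}$. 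The goal reduces to $\Pr[S = 0 \mid M = m] = o(1)$, which I will obtain from Chebyshev's inequality $\Pr[S = 0] \le \Var(S)/\mu^2$.

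For the first moment, symmetry gives $\mu = \binom n2 \Pr[\dul F \in \calE(m, m_\tau) \mid M = m]$ for any fixed $\tau$. Since $|m \cap m_\tau| = n - 2$ and the hypotheses $\Siga = \Sigb = \dul I^d$, $\Sigab = \rho\dul I$, $d = \omega(1)$ are precisely those of Lemma~\ref{lemma:Cramer}, that lemma yields $\Pr[\dul F \in \calE(m, m_\tau) \mid M = m] \ge (1 - \rho^2)^{\frac d2(1 + o(1))} = e^{-I_{\ul X\ul Y}(1 + o(1))}$, using $I_{\ul X\ul Y} = -\frac d2\log(1 - \rho^2)$. With $I_{\ul X\ul Y} \le 2\log n\,(1 - \Omega(1))$ this is $\ge n^{-2 + \Omega(1)}$, hence $\mu \ge n^{\Omega(1)} \to \infty$.

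For the second moment, write $\Var(S) = \sum_\tau \Var(X_\tau) + \sum_{\tau \ne \tau'}\mathrm{Cov}(X_\tau, X_{\tau'})$; the diagonal sum is at most $\mu$. Fix $\tau = (ab) \ne \tau' = (cd)$ and split on the size of $\{a,b\}\cap\{c,d\}$. If $\{a,b\}\cap\{c,d\} = \emptyset$, then $X_\tau$ is a function of the rows indexed by $a$ and $b$ in each database while $X_{\tau'}$ is a function of those indexed by $c$ and $d$; conditioned on $M = m$ the pairs $\{(\dul A_{u*}, \dul B_{m(u)*})\}_{u}$ are mutually independent, so $X_\tau$ and $X_{\tau'}$ are independent and the covariance vanishes. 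If $|\{a,b\}\cap\{c,d\}| = 1$, a short check gives $|m_\tau \cap m_{\tau'}| = n - 3$, and Lemma~\ref{lemma:errorIntersection} bounds $\E{X_\tau X_{\tau'} \mid M = m} = \Pr[\dul F \in \calE(m, m_\tau)\cap\calE(m, m_{\tau'}) \mid M = m] \le (1 - \rho^2)^{3d/4} = e^{-\frac32 I_{\ul X\ul Y}}$. Since there are $\Theta(n^3)$ ordered pairs of this second type, $\Var(S) \le \mu + \Theta(n^3)\,e^{-\frac32 I_{\ul X\ul Y}}$.

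Combining, $\Var(S)/\mu^2 \le \mu^{-1} + \Theta(n^3)e^{-\frac32 I_{\ul X\ul Y}}\big/\big(\Theta(n^4)e^{-2 I_{\ul X\ul Y}(1 + o(1))}\big) = \mu^{-1} + \Theta(n^{-1})\,e^{I_{\ul X\ul Y}(\frac12 + o(1))}$. The bound $I_{\ul X\ul Y} \le 2\log n\,(1 - \Omega(1))$ gives $e^{I_{\ul X\ul Y}/2} \le n^{1 - \Omega(1)}$, so the second term is $n^{-\Omega(1)} = o(1)$ and the first is $o(1)$ by the first-moment estimate. Therefore $\Pr[\hat m(\dul F) = m \mid M = m] \le \Pr[S = 0 \mid M = m] + o(1) = o(1)$ with a bound independent of $m$; averaging over the uniform $m$ gives $\Pr[\hat m(\dul F) = M] = o(1)$, which yields the theorem. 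The step needing care is the covariance control: applying Lemma~\ref{lemma:errorIntersection} to the $\Theta(n^4)$ disjoint-transposition pairs would only give $\E{X_\tau X_{\tau'}} \le e^{-2 I_{\ul X\ul Y}}$, a total contribution of the same order as $\mu^2$, so the exact conditional independence of $X_\tau$ and $X_{\tau'}$ in that case is what makes the second moment method succeed.
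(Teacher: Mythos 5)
Your proposal is correct and follows essentially the same route as the paper: a second-moment (Chebyshev) argument over the $\binom{n}{2}$ single-transposition matchings, with Lemma~\ref{lemma:Cramer} supplying the first-moment lower bound, Lemma~\ref{lemma:errorIntersection} controlling the overlapping-transposition pairs, and exact independence handling the disjoint ones. The closing remark about why independence (rather than Lemma~\ref{lemma:errorIntersection}) must be used for the disjoint pairs is the same observation implicit in the paper's variance computation.
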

\begin{proof}
  Let \(\calM^\calE(f,m) \define \acc{m'|f\in\calE(m,m'), m'\neq m}\) denote the set of matches that are at least as likely as \(m\) under the database instance \(f\). The MAP algorithm succeeds if and only if \(\calM^\calE(F,M)=\emptyset\).
  
  Also define \(\calM_2(m) \define \acc{m'\big||m\cap m'| = n-2}\).
  For compactness, let \(X \define |\calM^\calE(f,m) \cap \calM_2(m)|\).
  Clearly \(0 \leq X \leq |\calM^\calE(f,m)|\).
  
  We apply Chebyshev's inequality:
  \begin{align*}
    \Pr[|\calM^\calE(F,M)| = 0] \leq \Pr[X = 0]
    \leq \Pr\croc{\pth{X-\mathbb{E}X}^2 \geq \mathbb{E}^2X}
    \leq \frac{\Var X}{\mathbb{E}^2 X}
  \end{align*}
  
  All matchings are equally likely. Therefore, given any bijective matching \(m\in\UA\times\UB\),
  \begin{align*}
    \mathbb{E}\norm{\calM_2^\calE(F,M)} =\sum_{m'\in \calM_2(m)} \Pr[F\in\calE(m,m')|M=m]
  \end{align*}
  Let \(\varepsilon_1 \define \Pr[F\in\calE(m,m')|M=m]\) given \(|m\cap m'| = n-2\). Notice that this probability does not depend on the choice of \(m'\in\calM_2(m)\). Then \(\mathbb{E}\norm{\calM_2^\calE(F,M)} = |\calM_2(m)|\cdot \varepsilon_1 = \binom{n}{2}\cdot\varepsilon_1\).
  \begin{align*}
    |\calM_2^\calE(f,m)|^2
    &= \croc{\sum_{m'\in \calM_2(m)}\ind{f\in \calE(m,m')}}^2\\
    &= \sum_{m'\in \calM_2(m)} \ind{\calE(m,m')}
     + 2\times \sum_{\{m',m''\} \subset \calM_2(m)} \ind{\calE(m,m'),\calE(m,m'')}
  \end{align*}
  There are \(3\binom{n}{4}\) different ways to choose to matchings \(\{m',m''\}\subset \calM_2(m)\) such that \(|m'\cap m''| = n-4\), and \(3\binom{n}{3}\) ways to choose them such that \(|m'\cap m''| = n-3\).
  Notice that \(3\binom{n}{4}+3\binom{n}{3} = \binom{|\calM_2(m)|}{2}\) and these partition all the choices for \(\{m',m''\}\subset \calM_2(m)\).

  When \(|m'\cap m''| = n-4\), the error events become independent and we get \[\Pr[F\in\calE(m,m')\cap \calE(m,m'')|M=m] = \varepsilon_1^2.\]
  Let \(\varepsilon_2 \define \Pr[F\in\calE(m,m')\cap \calE(m,m'')|M=m]\) given \(|m'\cap m''| = n-3\).
  
  By the relation \(z + 2\binom{z}{2} = z^2\). For \(z = |\calM_2(m)| = \binom{n}{2}\) and \(\binom{z}{2} = 3\binom{n}{3} + 3\binom{n}{4}\) we can write:
  \begin{align*}
    \mathbb{E}^2\norm{\calM_2^\calE(F,M)}
    &= |\calM_2(m)|^2\varepsilon_1^2\\
    &= \binom{n}{2}\varepsilon_1^2 + \croc{6\binom{n}{3}+6\binom{n}{4}}\varepsilon_1^2\\
    \E{|\calM_2^\calE(F,M)|^2} &= \binom{n}{2} \varepsilon_1 + 6\binom{n}{3}\varepsilon_2 + 6\binom{n}{4}\varepsilon_1^2\\
    \Var\norm{\calM_2^\calE(F,M)} &= \binom{n}{2}(\varepsilon_1-\varepsilon_1^2) + 6\binom{n}{3}(\varepsilon_2-\varepsilon_1^2)\\
    &\leq \binom{n}{2}\varepsilon_1 + 6\binom{n}{3}\varepsilon_2
  \end{align*}
  Plugging these values into the Chernoff bound we get
  \begin{align*}
    \Pr[|\calM^\calE(F,M)| = 0]
    &\leq \frac{\binom{n}{2}\varepsilon_1 + 6\binom{n}{3}\varepsilon_2}{\binom{n}{2}^2\varepsilon_1^2}
    \leq \calO\pth{\frac{1}{n^2\varepsilon_1}+\frac{\varepsilon_2}{n\varepsilon_1^2}}
  \end{align*}
  By lemma \ref{lemma:Cramer} and \ref{lemma:errorIntersection} we have \(\varepsilon_1\geq (1-\rho_i^2)^{\frac{d}{2}(1-o(1))}\) and \(\varepsilon_2 \leq (1-\rho_i^2)^{3d/4}\).
  Thus \(\varepsilon_1^2/\varepsilon_2 \geq (1-\rho^2)^{\frac{d}{4}(1-o(1))}\).

  If $(1-\rho_i^2)^{\frac{d}{2}} \geq n^{-2 + \Omega(1)}$, then
  \begin{align*}
      n\varepsilon_1^2/\varepsilon_2 \geq n(1-\rho^2)^{\frac{d}{4}(1-o(1))} &\geq n^{1+\frac{1}{2}(1-o(1))(-2+\Omega(1))} \geq n^{\Omega(1)}\\
      \textrm{and } n^2\epsilon_1 = n^2 (1-\rho_i^2)^{\frac{d}{2}(1-o(1))} &\geq n^{2 + (1-o(1))(-2+\Omega(1))} \geq n^{\Omega(1)}
  \end{align*}
  and therefore $\Pr[|\calM^\calE(F,M)| = 0] \leq \mathcal{O}(n^{-\Omega(1)}) \leq o(1)$.
\bcomment{
  Let \(\eta\leq \calO(1)\) such that \((1-\rho_i^2)^d \geq n^{-2+\eta}\).
  Then \(\varepsilon_1 \geq n^{-2+\eta - o(1)}\) and \(\varepsilon_2/\varepsilon_1 \leq n^{-1+
    \eta/2+o(1)}\). This gives us
  \begin{align*}
    \Pr[e(F,M) = 0]
    &\leq \calO\pth{\frac{n^{-2+\eta/2+o(1)}}{\varepsilon_1}}\\
    & \leq \calO\pth{n^{-\eta/2 +o(1)}}
  \end{align*}
  If \(\eta \geq \omega(1/\log n)\) then \(\Pr[e(F,M) = 0] \leq o(1)\).
  This corresponds to \(I_{XY} \leq (2-\eta)\log n \leq 2\log n - \omega(1)\).

  From Lemma~\ref{lemma:three-matchings} we have $\epsilon_2 \leq (b^{\circ}_2(z,z))^{\frac{3}{2}}$ and from Lemma~\ref{lemma:event-lb} we have $\epsilon_1 \geq (b^{\circ}_2(z,z))^{1 + o(1)}$, so
  \begin{equation*}
    \Pr[ = 0] \leq \mathcal{O}\left(\frac{1}{n^2 (b^{\circ}_2(z,z))^{1 + o(1)}} + \frac{1}{n(b^{\circ}_2(z,z))^{\frac{1}{2} + o(1)}}\right).
  \end{equation*}
  If $b^{\circ}_2(z,z) \geq n^{-2 + \Omega(1)}$, then
  \[
    n^2 b^{\circ}_2(z,z)^{1+o(1)} \geq n^{2 + (1+o(1))(-2+\Omega(1))} \geq n^{\Omega(1)} \geq \omega(1)
  \]
  and $\Pr[X = 0] \leq o(1)$.
}
\end{proof}

\section{Binary hypothesis testing}

\paragraph{Matching algorithm}

We consider an algorithm that does gives us a `matching' \(\hat{m}\subseteq\UA\times\UB\) that is not necessarily bijective, i.e. any entry can have multiple matches in the other dataset.

Recall that we denote the \(j\)-th row of a matrix \(\dul{z}\) by \(\ul{z}_{j*}\).

Given some \(\dul{a}\in\realS^{\UA\times[d_1]}\) and \(\dul{b}\in\realS^{\UB\times[d_2]}\) and \(f=\bpth{\dul{a},\dul{b}}\) the estimated `matching' is given by
\begin{align*}
  \hat{m}(f)= \acc{(u,v)\in\UA\times\UB\Big|(\ul{a}_{u*}^\top,\ul{b}_{v*}^\top)\in H_\tau }.
\end{align*}
\(H_\tau\) is the log ratio test given by
\begin{align*}
  H_\tau = \acc{(\ul{x},\ul{y})\in\realS^d\times\realS^d\Big|\log\frac{p_{\ul{X}\ul{Y}}(\ul{x},\ul{y})}{p_{\ul{X}}(\ul{x})p_{\ul{Y}}(\ul{y})}\geq \tau}
\end{align*}
where \(p_{\ul{X}}\) and \(p_{\ul{Y}}\) denote the probability density functions of feature vectors associated with identifiers in \(\UA\) and \(\UB\) respectively, and \(\tau\in\realS\) is some constant to be determined.

\subsection{Achievability analysis}

\bcomment{
  We assume the special problem setting with \(\Siga=\Sigb=\dul{I}^d\) and \(\Sigab=\diag(\ul{\rho})\) where \(-1<\rho_i<1\) for all \(i\in[d]\) and establish sufficient conditions on the mutual information \(I_{\ul{X}\ul{Y}}\) between feature pairs. Under these assumptions the probability density functions of feature vectors from both sides are identical, therefore we denote both simply by \(q(\ul{z}) = p_{\ul{X}}(\ul{z})=p_{\ul{Y}}(\ul{z})\).
}

In our analysis we establish upper and lower bounds on the threshold \(\tau\) that allow given probability bounds on false negatives and false positives.
The mean and variance of the log ratio random variable were computed in Section~\ref{section:corr}.
Using these values we get an upper bound on the probability of false negatives in Lemma \ref{lemma:FN} by the Chebyshev inequality. Lemma \ref{lemma:FP} gives an upper bound on the number of false positives. Finally, taking the intersection of the conditions on \(\tau\) allows us to derive the achievability result given in Theorem \ref{thm:achLRT}.

\begin{lemma}
  \label{lemma:FN}
  If \(\tau \leq I_{\ul{X}\ul{Y}} - \sigma_{\ul{X}\ul{Y}}/\sqrt{\varepsilon}\) then
  \begin{align*}
    \Pr\croc{(\dul{A}_{u*}^\top,\dul{B}_{v*}^\top)\notin H_\tau|(u,v)\in M} \leq \varepsilon.
  \end{align*}
\end{lemma}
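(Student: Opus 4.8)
The plan is to recognize the random variable $Z \define \log\frac{p_{\ul{X}\ul{Y}}(\ul{X},\ul{Y})}{p_{\ul{X}}(\ul{X})p_{\ul{Y}}(\ul{Y})}$, evaluated at a correlated pair $(\ul{X},\ul{Y}) = (\dul{A}_{u*}^\top,\dul{B}_{v*}^\top)$ with $(u,v)\in M$, and to bound the lower tail $\Pr[Z < \tau]$ by a one-sided Chebyshev (Cantelli) inequality. From Section~\ref{section:corr} we already have $\E{Z} = I_{\ul{X}\ul{Y}}$ and $\Var(Z) = \sigma_{\ul{X}\ul{Y}}^2$. The false-negative event is exactly $\{(\dul{A}_{u*}^\top,\dul{B}_{v*}^\top)\notin H_\tau\} = \{Z < \tau\}$ (or $\leq \tau$; a measure-zero boundary distinction that is irrelevant for absolutely continuous $Z$).

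First I would write, using the hypothesis $\tau \leq I_{\ul{X}\ul{Y}} - \sigma_{\ul{X}\ul{Y}}/\sqrt{\varepsilon}$, that $\E{Z} - \tau \geq \sigma_{\ul{X}\ul{Y}}/\sqrt{\varepsilon} > 0$, so the event $\{Z \leq \tau\}$ is contained in $\{Z \leq \E{Z} - \sigma_{\ul{X}\ul{Y}}/\sqrt{\varepsilon}\} \subseteq \{|Z - \E{Z}| \geq \sigma_{\ul{X}\ul{Y}}/\sqrt{\varepsilon}\}$. Then Chebyshev's inequality gives
\begin{align*}
  \Pr\croc{|Z - \E{Z}| \geq \frac{\sigma_{\ul{X}\ul{Y}}}{\sqrt{\varepsilon}}} \leq \frac{\Var(Z)}{\sigma_{\ul{X}\ul{Y}}^2/\varepsilon} = \varepsilon,
\end{align*}
which is the claimed bound. (If a sharper constant were desired one could invoke the one-sided Cantelli bound $\Var(Z)/(\Var(Z) + (\E Z - \tau)^2)$, but the plain two-sided Chebyshev already suffices and matches the statement.)

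I expect essentially no obstacle here: the only content beyond bookkeeping is that the two moment identities from Section~\ref{section:corr} are being reused, and that the chosen deviation $\sigma_{\ul{X}\ul{Y}}/\sqrt{\varepsilon}$ is exactly calibrated so that Chebyshev returns $\varepsilon$. The one point worth stating carefully is that $\sigma_{\ul{X}\ul{Y}} > 0$ (equivalently some $\rho_i \neq 0$) so that the bound is non-vacuous and the division by $\sqrt{\varepsilon}$ is legitimate; in the degenerate all-zero-correlation case the lemma is trivially true or vacuous. The remainder is a direct substitution.
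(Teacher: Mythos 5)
Your proposal is correct and follows essentially the same route as the paper: both apply the two-sided Chebyshev inequality to the log-likelihood ratio, using the mean $I_{\ul{X}\ul{Y}}$ and variance $\sigma_{\ul{X}\ul{Y}}^2$ computed in Section~\ref{section:corr}, and note that the false-negative event at any $\tau \leq I_{\ul{X}\ul{Y}} - \sigma_{\ul{X}\ul{Y}}/\sqrt{\varepsilon}$ is contained in the deviation event of size $\sigma_{\ul{X}\ul{Y}}/\sqrt{\varepsilon}$. Your added remarks about Cantelli and the degenerate $\sigma_{\ul{X}\ul{Y}} = 0$ case are reasonable but not needed.
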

\begin{proof}
  Let \((u,v)\in M\) and \((\ul{X},\ul{Y})=(\dul{A}_{u*}^\top,\dul{B}_{v*}^\top)\). Given \(\mu = I_{\ul{X}\ul{Y}} = \E{\log \frac{p_{\ul{X}\ul{Y}}(\ul{X},\ul{Y})}{p_{\ul{X}}(\ul{X})p_{\ul{Y}}(\ul{Y})}} = I_{\ul{X}\ul{Y}}\) and \(\sigma^2 = \sigma_{\ul{X}\ul{Y}}^2 = \Var\pth{\log \frac{p_{\ul{X}\ul{Y}}(\ul{X},\ul{Y})}{p_{\ul{X}}(\ul{X})p_{\ul{Y}}(\ul{Y})}}\), by Chebyshev's inequality we get
  \begin{align*}
    \Pr\croc{\norm{\mu-\log \frac{p_{\ul{X}\ul{Y}}(\ul{X},\ul{Y})}{p_{\ul{X}}(\ul{X})p_{\ul{Y}}(\ul{Y})}}\geq \frac{\sigma}{\sqrt{\varepsilon}}} \leq \varepsilon
  \end{align*}
  This probability is lower bounded by \(\Pr\left[\mu-\log \frac{p_{\ul{X}\ul{Y}}(\ul{X},\ul{Y})}{p_{\ul{X}}(\ul{X})p_{\ul{Y}}(\ul{Y})}\geq \frac{\sigma}{\sqrt{\varepsilon}}\right]\) which is equal to the probability \(\Pr\left[(\dul{A}_{u*}^\top,\dul{B}_{v*}^\top)\notin H_\tau|(u,v)\in M\right]\) for \(\tau = \mu-\sigma/\sqrt{\varepsilon}\). Then this choice of \(\tau\), or any smaller value, is a sufficient condition to bound the error probability by \(\varepsilon\).
\end{proof}

\begin{lemma}
  \label{lemma:FP}
  Given any \(\tau\in\realS\),
  \begin{align*}
    \Pr\croc{(\dul{A}_{u*}^\top,\dul{B}_{v*}^\top)\in H_\tau|(u,v)\notin M} \leq e^{-\tau}
  \end{align*}
\end{lemma}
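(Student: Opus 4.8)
The plan is to bound the false-positive probability by a change-of-measure (exponential Markov) argument applied to the log-likelihood ratio, exactly dual to how Lemma \ref{lemma:FN} uses Chebyshev. Fix $(u,v) \notin M$, so that $\dul{A}_{u*}$ and $\dul{B}_{v*}$ are independent; write $\ul{X} = \dul{A}_{u*}^\top$ and $\ul{Y} = \dul{B}_{v*}^\top$, so that $(\ul{X},\ul{Y})$ has the product density $p_{\ul{X}}(\ul{x}) p_{\ul{Y}}(\ul{y})$. By the definition of $H_\tau$,
\begin{align*}
  \Pr\croc{(\ul{X},\ul{Y}) \in H_\tau}
  = \Pr\croc{\log\frac{p_{\ul{X}\ul{Y}}(\ul{X},\ul{Y})}{p_{\ul{X}}(\ul{X})p_{\ul{Y}}(\ul{Y})} \geq \tau}
  = \Pr\croc{\frac{p_{\ul{X}\ul{Y}}(\ul{X},\ul{Y})}{p_{\ul{X}}(\ul{X})p_{\ul{Y}}(\ul{Y})} \geq e^{\tau}}.
\end{align*}

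Next I would apply Markov's inequality to the nonnegative random variable $L \define p_{\ul{X}\ul{Y}}(\ul{X},\ul{Y}) / \bpth{p_{\ul{X}}(\ul{X})p_{\ul{Y}}(\ul{Y})}$ under the product measure: $\Pr[L \geq e^\tau] \leq e^{-\tau} \E{L}$, where the expectation is over $(\ul{X},\ul{Y}) \sim p_{\ul{X}} \otimes p_{\ul{Y}}$. The key observation is that this expectation equals exactly $1$, since
\begin{align*}
  \E{L} = \int \frac{p_{\ul{X}\ul{Y}}(\ul{x},\ul{y})}{p_{\ul{X}}(\ul{x})p_{\ul{Y}}(\ul{y})} \, p_{\ul{X}}(\ul{x})p_{\ul{Y}}(\ul{y}) \, d\ul{x}\, d\ul{y} = \int p_{\ul{X}\ul{Y}}(\ul{x},\ul{y})\, d\ul{x}\, d\ul{y} = 1.
\end{align*}
Combining the two displays gives $\Pr\croc{(\dul{A}_{u*}^\top,\dul{B}_{v*}^\top)\in H_\tau \mid (u,v)\notin M} \leq e^{-\tau}$, which is the claim.

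There is essentially no obstacle here: the only point requiring a word of care is the justification that $(\ul{X},\ul{Y})$ genuinely has the product density when $(u,v) \notin M$, which follows from the model, since the rows of $\dul{A}$ are i.i.d., the rows of $\dul{B}$ are i.i.d., and $\dul{A}_{u*}$, $\dul{B}_{v*}$ are jointly distributed as in $p_{\dul{A}\dul{B}|M}$ only when they are matched; for an unmatched pair they are independent with the correct marginals. A second minor subtlety is that $p_{\ul{X}\ul{Y}}/(p_{\ul{X}}p_{\ul{Y}})$ should be interpreted on the support of $p_{\ul{X}} \otimes p_{\ul{Y}}$, which after the canonical reduction (Section \ref{subsec:problem}) has full-dimensional support, so no measure-zero issues arise. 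Note this argument uses nothing Gaussian — it is the standard bound on the size of a likelihood-ratio acceptance region — consistent with the remark in the results section that the companion converse holds for arbitrary feature distributions.
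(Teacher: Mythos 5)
Your proof is correct and follows essentially the same route as the paper: Markov's inequality applied to the likelihood ratio $p_{\ul{X}\ul{Y}}/(p_{\ul{X}}p_{\ul{Y}})$ under the product measure, whose expectation integrates to $1$. You even state the conditioning correctly as $(u,v)\notin M$, whereas the paper's proof opens with the typo ``Let $(u,v)\in M$'' while still (correctly) computing the expectation under the product density.
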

\begin{proof}
  Let \((u,v)\in M\) and \((\ul{X},\ul{Y})=(\dul{A}_{u*}^\top,\dul{B}_{v*}^\top)\). By Markov's inequality we get
  \begin{align*}
    \Pr\croc{\log \frac{p_{\ul{X}\ul{Y}}(\ul{X},\ul{Y})}{p_{\ul{X}}(\ul{X})p_{\ul{Y}}(\ul{Y})}\geq \tau} 
                                                                                                          &\leq e^{-\tau}\cdot\E{\frac{p_{\ul{X}\ul{Y}}(\ul{X},\ul{Y})}{p_{\ul{X}}(\ul{X})p_{\ul{Y}}(\ul{Y})}}
  \end{align*}
  We calculate the mean:
  \begin{align*}
    \E{\frac{p_{\ul{X}\ul{Y}}(\ul{X},\ul{Y})}{p_{\ul{X}}(\ul{X})p_{\ul{Y}}(\ul{Y})}}
    &= \int_{\ul{X},\ul{Y}} p_{\ul{X}}(\ul{X})p_{\ul{Y}}(\ul{Y})\cdot \frac{p_{\ul{X}\ul{Y}}(\ul{X},\ul{Y})}{p_{\ul{X}}(\ul{X})p_{\ul{Y}}(\ul{Y})} d(\ul{X},\ul{Y})\\
    &= \int_{\ul{X},\ul{Y}} p_{\ul{X}\ul{Y}}(\ul{X},\ul{Y}) d(\ul{X},\ul{Y})
  \end{align*}
  which equals to 1 since \(p_{\ul{X}\ul{Y}}(\ul{X},\ul{Y})\) is as a probability density function.
\end{proof}

\paragraph{Proof of Theorem \ref{thm:achLRT}}

By Lemma \ref{lemma:FN}, if \(\tau \leq I_{\ul{X}\ul{Y}} - \sigma_{\ul{X}\ul{Y}}/\sqrt{\varepsilon}\), then the probability that any correct match is not included in \(H_\tau\) is upper bounded by \(\varepsilon_{FN}/n\). There are \(n\) correct matches in \(\UA\times\UB\). Then the expected number of correct matches not included in \(H_\tau\), i.e. the expected number of false negatives, is upper bounded by \(\varepsilon_{FN}\).

By Lemma \ref{lemma:FP}, if \(\tau \geq \log\pth{n^2/\varepsilon_{FP}}\), then the probability that any incorrect match is included in \(H_\tau\) is upper bounded by \(\varepsilon_{FP}/n^2\). There are \(\binom{n}{2}<n^2\) incorrect matches in \(\UA\times\UB\). Then the expected number of incorrect matches included in \(H_\tau\), i.e. the expected number of false positives,  is upper bounded by \(\varepsilon_{FP}\).

A choice for \(\tau\in\realS\) satisfying both conditions exists if and only if the condition in the theorem statement holds. \hfill \qedsymbol

\bcomment{
  \newtheorem*{tAchLRT}{Theorem \ref*{thm:achLRT}}
  \begin{tAchLRT}
    \thmAchLRT
  \end{tAchLRT}

  \begin{proof}
    By Lemma \ref{lemma:FN}, if \(\tau \leq I_{\ul{X}\ul{Y}} - \sigma/\sqrt{\varepsilon}\), then the probability that any correct match is not included in \(H_\tau\) is upper bounded by \(\varepsilon_{FN}/n\). There are \(n\) correct matches in \(\UA\times\UB\). Then the expected number of correct matches not included in \(H_\tau\), i.e. the expected number of false negatives, is upper bounded by \(\varepsilon_{FN}\).
    
    By Lemma \ref{lemma:FP}, if \(\tau \geq \log\pth{n^2/\varepsilon_{FP}}\), then the probability that any incorrect match is included in \(H_\tau\) is upper bounded by \(\varepsilon_{FP}/n^2\). There are \(\binom{n}{2}<n^2\) incorrect matches in \(\UA\times\UB\). Then the expected number of incorrect matches included in \(H_\tau\), i.e. the expected number of false positives,  is upper bounded by \(\varepsilon_{FP}\).
    
    A choice for \(\tau\in\realS\) satisfying both conditions exists if and only if the condition in the theorem statement holds.
  \end{proof}
}

\subsection{Converse analysis}

We present a converse on the performance of the binary hypothesis testing algorithm based on Fano's inequality.
\bcomment{
Given arbitrary pair of identifiers \((u,v)\in\UA\times\UB\), define random variables \(Z\define\ind{(u,v)\in M}\) and  \(\hat{Z}\define\ind{(u,v)\in \hat{m}(F)}\). Notice that \(Z\) only depends on rows \(\ul{A}_{u*}\) and \(\ul{B}_{v*}\) in \(F = \bpth{\dul{A},\dul{B}}\). Let \((\ul{X},\ul{Y}) = (\ul{A}_{u*}^\top,\ul{B}_{v*}^\top)\).
\begin{lemma}
    \label{lemma:conditionalEntropy}
    \begin{align*}
        H(Z|\ul{X},\ul{Y}) \geq \frac{\log n - I_{\ul{X}\ul{Y}}}{n}
    \end{align*}
\end{lemma}
\begin{proof}
    By Bayes' theorem,
    \begin{align}
        H(Z|\ul{X},\ul{Y}) &= \E{-\log\Pr[Z|\ul{X},\ul{Y}]} \nonumber\\
        &= \E{-\log\frac{p_{XY|Z}(\ul{X},\ul{Y}|Z)\cdot\Pr[Z]}{p_{\ul{X}\ul{Y}}(\ul{X},\ul{Y})}} \nonumber\\
        &= h(\ul{X},\ul{Y}|Z) + H(Z) - h(\ul{X},\ul{Y}), \label{eqn:conLRT1}
    \end{align}
    where \(p_{XY|Z}\) and \(p_{\ul{X}\ul{Y}}\) are probability density functions and \(h\) denotes differential entropy.
    
    By \(|\UA\times\UB|= n|M| = n^2\), given arbitrary \((u,v)\in\UA\times\UB\) we have \(\Pr[(u,v)\in M] = \Pr[Z=1] = 1/n\). Then, given \(H_b\) the binary entropy function, \(H(Z) = H_b(1/n) \geq \frac{\log n}{n}\).
    
    We have
    \begin{align*}
        h(\ul{X},\ul{Y}|Z) &= \mathbb{E}_Z\croc{h(\ul{X},\ul{Y}|Z=z)}\\
        &= \sum_{z\in\{0,1\}} \Pr[Z=z]\cdot h(\ul{X},\ul{Y}|Z=z).
    \end{align*}
    For \(Z=1\), \(\ul{X}\) and \(\ul{Y}\) are jointly Gaussian and \begin{align*}
        h(\ul{X},\ul{Y}|1) &= \frac{1}{2}\log\pth{\det\bpth{2\pi e\dul{\Sigma}}}\\
        &= d+d\log(2\pi) + \frac{1}{2}\log\pth{\det\bpth{\dul{\Sigma}}},
    \end{align*}
    where \(\dul{\Sigma}\) is the covariance matrix of vector \((\ul{X}^\top,\ul{Y}^\top)^\top\), as defined in section \ref{subsec:problem}.
    
    For \(Z=0\), \(\ul{X}\) and \(\ul{Y}\) are independent and therefore
    \begin{align*}
        h(\ul{X},\ul{Y}|0) &= h(\ul{X}) + h(\ul{Y})\\
        &= \frac{1}{2}\log\pth{\det\bpth{2\pi e\dul{\Sigma}_A}} + \frac{1}{2}\log\pth{\det\bpth{2\pi e\dul{\Sigma}_B}}\\
        &= d+d\log(2\pi) + \frac{1}{2}\log\pth{\det\bpth{\dul{\Sigma_A}}\cdot\det\bpth{\dul{\Sigma_B}}},
    \end{align*}
    where \(\dul{\Sigma}_A\) and \(\dul{\Sigma}_B\) are the covariance matrices of vectors \(\ul{X}\) and \(\ul{Y}\) respectively, as defined in section \ref{subsec:problem}.
    
    Then, by \(\Pr[Z=1] = 1/n\),
    \begin{align}
        h(\ul{X},\ul{Y}|Z) =& d+d\log(2\pi) + \frac{1}{2}\log\pth{\det\bpth{\dul{\Sigma_A}}\cdot\det\bpth{\dul{\Sigma_B}}} \nonumber\\
        &+ \frac{1}{2n}\log\frac{\det\bpth{\dul{\Sigma}}}{\det\bpth{\dul{\Sigma}_A}\cdot\det\bpth{\dul{\Sigma}_B}}. \label{eqn:conLRT2}
    \end{align}
    Notice that this last term is equal to \(-I_{\ul{X}\ul{Y}}/n\).
    
    Finally
    \begin{align*}
        h(\ul{X},\ul{Y}) &= h(\ul{X}) + h(\ul{Y}) - I(\ul{X};\ul{Y}) \leq h(\ul{X}) + h(\ul{Y})\\
        &= d+d\log(2\pi) + \frac{1}{2}\log\pth{\det\bpth{\dul{\Sigma_A}}\cdot\det\bpth{\dul{\Sigma_B}}}
    \end{align*}
    
    Combining this with \(H(Z)=\log n/n\), (\ref{eqn:conLRT1}) and (\ref{eqn:conLRT2}) gives us the claimed result.
\end{proof}}

\begin{lemma}
    \label{lemma:conditionalEntropy}
  For $u \in \UA$ and $v \in \UB$, $H(\dul{M}_{u,v}|\dul{A}_u,\dul{B}_v) \geq \frac{\log n - I_{\ul{X}\ul{Y}}}{n}$.
\end{lemma}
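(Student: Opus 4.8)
The plan is to compute $H(\dul{M}_{u,v}\mid \dul{A}_u,\dul{B}_v)$ directly via Bayes' rule, exactly as in the (commented-out) longer version, but organized around the three terms that appear when one writes the posterior of the indicator $Z := \dul{M}_{u,v}$ given the feature pair $(\ul{X},\ul{Y}) := (\dul{A}_u^\top,\dul{B}_v^\top)$. First I would note that, since $|\UA\times\UB| = n|M| = n^2$ and $M$ is uniform over bijections, each fixed pair $(u,v)$ satisfies $\Pr[Z=1] = 1/n$, so $H(Z) = H_b(1/n) \geq \frac{\log n}{n}$. Next, by Bayes' theorem $-\log\Pr[Z\mid \ul{X},\ul{Y}] = -\log p_{\ul{X}\ul{Y}\mid Z}(\ul{X},\ul{Y}\mid Z) - \log\Pr[Z] + \log p_{\ul{X}\ul{Y}}(\ul{X},\ul{Y})$, and taking expectations gives
\[
  H(Z\mid \ul{X},\ul{Y}) = h(\ul{X},\ul{Y}\mid Z) + H(Z) - h(\ul{X},\ul{Y}),
\]
where $h$ is differential entropy (the densities conditional on $Z=0$ and $Z=1$ are well-defined Gaussians, so the mixed entropy identity is legitimate).

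The second step is to evaluate the two differential-entropy terms. Conditioned on $Z=1$, $(\ul{X},\ul{Y})$ is jointly Gaussian with covariance $\dul{\Sigma}$, so $h(\ul{X},\ul{Y}\mid Z=1) = \frac12\log\det(2\pi e\,\dul{\Sigma})$; conditioned on $Z=0$, $\ul{X}$ and $\ul{Y}$ are independent Gaussians, so $h(\ul{X},\ul{Y}\mid Z=0) = \frac12\log\det(2\pi e\,\Siga) + \frac12\log\det(2\pi e\,\Sigb)$. Averaging with weights $\Pr[Z=1]=1/n$, $\Pr[Z=0]=1-1/n$, the ``$\frac12\log\det(2\pi e\,\Siga)+\frac12\log\det(2\pi e\,\Sigb)$'' part survives in full and the correction is $\frac{1}{2n}\log\frac{\det\dul{\Sigma}}{\det\Siga\det\Sigb} = -\frac{I_{\ul{X}\ul{Y}}}{n}$ by the formula for $I_{\ul{X}\ul{Y}}$ in Section~\ref{section:corr}. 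For the unconditional term I would use $h(\ul{X},\ul{Y}) = h(\ul{X}) + h(\ul{Y}) - I(\ul{X};\ul{Y}) \leq h(\ul{X}) + h(\ul{Y}) = \frac12\log\det(2\pi e\,\Siga) + \frac12\log\det(2\pi e\,\Sigb)$, since marginally $\ul{X}\sim\calN(\mua,\Siga)$ and $\ul{Y}\sim\calN(\mub,\Sigb)$.

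Substituting everything into the identity above, the two $\frac12\log\det(2\pi e\cdot)$ blocks cancel, leaving
\[
  H(Z\mid \ul{X},\ul{Y}) \geq -\frac{I_{\ul{X}\ul{Y}}}{n} + \frac{\log n}{n} = \frac{\log n - I_{\ul{X}\ul{Y}}}{n},
\]
which is the claim. The only subtlety worth checking — and the step I would treat most carefully — is that $h(\ul{X},\ul{Y})$ is being \emph{upper} bounded while $h(\ul{X},\ul{Y}\mid Z)$ is computed exactly, and that the inequality goes in the right direction (it does: dropping $-I(\ul{X};\ul{Y})\le 0$ only makes $-h(\ul{X},\ul{Y})$ larger). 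There is no real obstacle here; it is a clean Gaussian-entropy bookkeeping argument, and the main thing is to keep the $1/n$ weighting straight so that the mutual-information term appears with the correct $1/n$ factor.
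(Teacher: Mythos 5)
Your proof is correct, and every step checks out: the Bayes decomposition $H(Z\mid\ul{X},\ul{Y}) = h(\ul{X},\ul{Y}\mid Z) + H(Z) - h(\ul{X},\ul{Y})$ is legitimate, the $2\pi e$ and $\log\det$ bookkeeping cancels as you claim, and the inequality directions (lower-bounding $H(Z)$ by $\tfrac{\log n}{n}$, upper-bounding $h(\ul{X},\ul{Y})$ by dropping $-I(\ul{X};\ul{Y})\le 0$) are handled properly. The difference from the paper is one of packaging rather than substance. The paper's proof never touches differential entropy or Gaussianity: it expands $H(\dul{M}_{u,v}\mid\dul{A}_u,\dul{B}_v)$ via the pure mutual-information identity $H(Z\mid\ul{X},\ul{Y}) = H(Z) + I(\ul{X};\ul{Y}) - I(Z;\ul{X}) - I(Z;\ul{Y}) - I(\ul{X};\ul{Y}\mid Z)$, kills $I(Z;\ul{X})$ and $I(Z;\ul{Y})$ by the independence of each single database from $M$, computes $I(\ul{X};\ul{Y}\mid Z) = \tfrac{1}{n}I_{\ul{X}\ul{Y}}$ directly from the mixture weights, and drops $I(\ul{X};\ul{Y})\ge 0$ — the exact same three ingredients you use, but stated abstractly. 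What the paper's route buys is that the lemma (and hence Theorem \ref{thm:conLRT}, which the paper explicitly states holds for arbitrary feature distributions) requires no Gaussian structure at all; your version re-derives the same cancellation through explicit $\tfrac12\log\det(2\pi e\,\cdot)$ formulas, which works here but ties the argument to the Gaussian model and obliges you to verify that the marginals of the mixture are exactly Gaussian (they are, as you note). If you wanted your write-up to match the paper's level of generality, you would replace $h(\ul{X},\ul{Y}) - h(\ul{X},\ul{Y}\mid Z)$ by $I(Z;\ul{X},\ul{Y})$ and observe that your Gaussian computations are just evaluating $I(\ul{X};\ul{Y}\mid Z)$ and $I(\ul{X};\ul{Y})$.
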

\begin{proof}
  We have
  \begin{align*}
    H(\dul{M}_{u,v}|\dul{A}_u,\dul{B}_v) = H(\dul{M}_{u,v}) + I(\dul{A}_u;\dul{B}_v)
    - I(\dul{M}_{u,v};\dul{A}_u) - I(\dul{M}_{u,v};\dul{B}_v) - I(\dul{A}_u;\dul{B}_v|\dul{M}_{u,v}).
  \end{align*}
  Then $I(\dul{M}_{u,v};\dul{A}_u) = I(\dul{M}_{u,v};\dul{B}_v) = 0$ and
  \begin{align*}
    I(\dul{A}_u;\dul{B}_v|\dul{M}_{u,v})
    &= \frac{n-1}{n} I(\dul{A}_u|(\dul{M}_{u,v} = 0);\dul{B}_v|(\dul{M}_{u,v} = 0))
     + \frac{1}{n} I(\dul{A}_u|(\dul{M}_{u,v} = 1);\dul{B}_v|(\dul{M}_{u,v} = 1))\\
    &= \frac{n-1}{n}\cdot 0 + \frac{1}{n}I_{\ul{X}\ul{Y}}.
  \end{align*}
  Finally $I(\dul{A}_u;\dul{B}_v) \geq 0$ and $H(\dul{M}_{u,v}) = \frac{1}{n} \log n + \frac{n-1}{n}\log \frac{n}{n-1} \geq \frac{\log n}{n}$.
\end{proof}
\begin{proof}[Proof of Theorem \ref{thm:conLRT}]
Let \(\hat{\dul{M}}_{u,v}\define \ind{\bpth{\dul{A}_u,\dul{B}_v}\in H_\tau}\) denote the estimation on the relation between identifiers \(u\) and \(v\). We have a correct estimation if \(\hat{\dul{M}}_{u,v} = \dul{M}_{u,v}\). 
Define \(E \define \ind{\hat{\dul{M}}_{u,v} \neq \dul{M}_{u,v}}\). Then by Fano's inequality,
\begin{align*}
    H(\dul{M}_{u,v}|\dul{A}_u,\dul{B}_v) \leq H(E) + \Pr[E=1],
\end{align*}
which gives the upper bound as \(H(E)\).

Let \(\epsilon \define \Pr[E=1]\). This value can also be expressed as the expected frequency of false matches, i.e. given \(\varepsilon_{FN}\) and \(\varepsilon_{FP}\) the expected number of false negatives and false positives, \(\epsilon = \frac{\varepsilon_{FN}+\varepsilon_{FP}}{|\UA\times\UB|} = \frac{\varepsilon_{FN}+\varepsilon_{FP}}{n^2}\).
    
Let \(H_b\) denote the binary entropy function. By Fano's inequality, using Lemma \ref{lemma:conditionalEntropy}, we have
\begin{equation}
    H_b(\epsilon) \geq H(\dul{M}_{u,v}|\dul{A}_u,\dul{B}_v) \geq \frac{\log n-I_{\ul{X}\ul{Y}}}{n} \label{eqn:conLRT3}
\end{equation}
We have
\begin{align*}
    H_b(\epsilon) &\leq -\epsilon\log\epsilon + \epsilon
    = \frac{\varepsilon_{FN}+\varepsilon_{FP}}{n^2}\pth{2\log n - \log \pth{\varepsilon_{FN}+\varepsilon_{FP}} + 1}.
\end{align*}
Combining this with (\ref{eqn:conLRT3}) gives us
\begin{equation*}
    \varepsilon_{FN}+\varepsilon_{FP} \geq \frac{n}{2} \pth{\frac{\log n -I_{\ul{X}\ul{Y}}}{2 \log n + 1}}
\end{equation*}
and the claim follows.
\end{proof}

\bcomment{
\newtheorem*{tConLRT}{Theorem \ref*{thm:conLRT}}
\begin{tConLRT}
  \thmConLRT
\end{tConLRT}

\begin{proof}
    Define \(E \define \ind{Z\neq \hat{Z}}\). Then by Fano's inequality
    \begin{align*}
        H(Z|F) \leq H(E) + \Pr[E=1]\log(|\{0,1\}|-1) = H(E)
    \end{align*}
    Let \(\epsilon \define \Pr[E=1]\). This value can also be expressed as the expected frequency of false matches, i.e. given \(\varepsilon_{FN}\) and \(\varepsilon_{FP}\) the expected number of false negatives and false positives, \(\epsilon = \frac{\varepsilon_{FN}+\varepsilon_{FP}}{|\UA\times\UB|} = \frac{\varepsilon_{FN}+\varepsilon_{FP}}{n^2}\).
    
    Let \(H_b\) denote the binary entropy function. By Fano's inequality, using Lemma \ref{lemma:conditionalEntropy}, we have
    \begin{align}
        H_b(\epsilon) &\geq H(Z|F) = H(Z|\ul{X},\ul{Y}) \nonumber\\
        &\geq \frac{\log n-I_{\ul{X}\ul{Y}}}{n} + o(1) \label{eqn:conLRT3}
    \end{align}
    
    For \(\epsilon = o(1)\) we have
    \begin{align*}
        H_b(\epsilon) &= -\epsilon\log\epsilon + o(1) \nonumber\\
        &= \frac{\varepsilon_{FN}+\varepsilon_{FP}}{n^2}\pth{2\log n - \log \pth{\varepsilon_{FN}+\varepsilon_{FP}}} + o(1) \nonumber\\
        &\leq \frac{2\pth{\varepsilon_{FN}+\varepsilon_{FP}}\log n}{n^2}.
    \end{align*}
    Combining this with (\ref{eqn:conLRT3}) gives us
    \begin{align*}
        \frac{I_{\ul{X}\ul{Y}}}{\log n} \geq 1-\frac{2\pth{\varepsilon_{FN}+\varepsilon_{FP}}}{n}
    \end{align*}
\end{proof}
}

\appendix
\appendix

\section{Transformation of feature vectors}
\label{sec:featureTransformation}

By the invertibility of \(\Siga\) and \(\Sigb\), there exist Cholesky decompositions with invertible matrices \(\La\) and \(\Lb\) such that \(\Siga=\La\La^\top\) and \(\Sigb = \Lb\Lb^\top\).

Furthermore let \(\dul{U}\Sigab'\dul{V}^\top = \La^{-1}\Sigab\bpth{\Lb^\top}^{-1}\) be a singular-value decomposition with \(\dul{U}\) and \(\dul{V}\) unitary matrices and \(\Sigab'\) a diagonal matrix of size \(d_1\times d_2\).

Finally let \(\mua\) and \(\mub\) denote the mean of features from the two datasets.

Let the feature transformations be defined as
\begin{align*}
    \ul{X}' = \Ta(\ul{X}) &= \bpth{\dul{U}^\top\La^{-1}}\bpth{\ul{X}-\mua}\\
    \ul{Y}' = \Tb(\ul{Y}) &= \bpth{\dul{V}^\top\Lb^{-1}}\bpth{\ul{Y}-\mub}.
\end{align*}

It can be verified that
\begin{align*}
    \mathbb{E}\crocVec{\ul{X}'}{\ul{Y}'} = \ul{0} \textrm{ and }\dul{\Sigma}'\define \E{\crocVec{\ul{X}'}{\ul{Y}'}\crocVec{\ul{X}'}{\ul{Y}'}^\top} = \crocMat{\dul{I}^{d_1}}{\Sigab'}{\Sigab'^\top}{\dul{I}^{d_2}}.
\end{align*}

Both \(\dul{U}^\top\La^{-1}\) and \(\dul{V}^\top\Lb^{-1}\) are invertible matrices, therefore \(\Ta\) and \(\Tb\) are bijective functions with no loss of information.

Next we perform a non-reversible operation. Consider the zero rows and columns of \(\Sigab'\). These correspond to transformed feature entries that have no correlation with features from the other dataset, hence they provide no information in identifying matching features. Simply dropping these results in no loss of information. \(\Sigab'\) is a diagonal matrix, therefore the number of features we keep from either dataset is simply equal to the number of non-zero entries in \(\Sigab'\). Thus the final feature vectors are  of equal size \(d\). Let \(\ul{X}''\) and \(\ul{Y}''\) be the final features obtained after throwing away entries in \(\ul{X}'\) and \(\ul{Y}'\), and \(\Sigab''\in\realS^{d\times d}\) be the matrix obtained by removing zero rows and columns from \(\Sigab'\). Then,
\begin{align*}
    \dul{\Sigma}'' \define \E{\crocVec{\ul{X}''}{\ul{Y}''}\crocVec{\ul{X}''}{\ul{Y}''}^\top} = \crocMat{\dul{I}^d}{\Sigab'}{\Sigab'^\top}{\dul{I}^d}.
\end{align*}

Let \(\ul{\rho}\) denote the diagonal entries in \(\Sigab''\).

\begin{lemma}
    \label{lemma:mutualInformation}
    Let
    \begin{align*}
        \dul{\Sigma} = \crocMat{\Siga}{\Sigab}{\Sigab^\top}{\Sigb} \textrm{ and } \dul{\Sigma}'' = \crocMat{\dul{I}}{\diag(\ul{\rho})}{\diag(\ul{\rho})}{\dul{I}}
    \end{align*}
    be the covariance matrices of the original features and of the transformed features respectively. The mutual information between matched pairs of original features \((\ul{X},\ul{Y})\) and matched pairs of transformed features \((\ul{X}'',\ul{Y}'')\) is the same and given by
    \begin{align*}
        I_{XY} &= -\frac{1}{2}\sum_{i\in[d]}\log(1-\rho_i^2)\\
        &= -\frac{1}{2}\log \frac{\det\bpth{\dul{\Sigma}}}{\det\bpth{\Siga}\cdot\det\bpth{\Sigb}}.
    \end{align*}
\end{lemma}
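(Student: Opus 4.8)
The plan is to show that the mutual information is invariant under the feature transformations $\Ta,\Tb$ described above, and then to compute it directly in the canonical form. The invariance has two parts: the bijective affine maps $\dul{U}^\top\La^{-1}$ and $\dul{V}^\top\Lb^{-1}$ and the discarding of the zero rows/columns of $\Sigab'$. For the first part, mutual information is invariant under applying invertible (here, affine) transformations separately to each argument, since such maps induce bijections of the underlying $\sigma$-algebras; equivalently, one can track the change-of-variables Jacobians through $I(\ul X;\ul Y) = h(\ul X) + h(\ul Y) - h(\ul X,\ul Y)$ and observe that the Jacobian contributions cancel. For the second part, the dropped coordinates of $\ul X'$ (those corresponding to zero columns of $\Sigab'$) are jointly Gaussian, uncorrelated — hence independent — of all of $\ul Y'$ and of the retained coordinates of $\ul X'$; so by the chain rule for mutual information they contribute nothing, and symmetrically for the dropped coordinates of $\ul Y'$. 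Thus $I_{\ul X\ul Y} = I_{\ul X''\ul Y''}$.

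Next I would evaluate $I_{\ul X''\ul Y''}$ in the canonical covariance form $\dul{\Sigma}'' = \crocMat{\dul{I}}{\diag(\ul\rho)}{\diag(\ul\rho)}{\dul{I}}$. Using the standard formula for the mutual information of a jointly Gaussian pair, $I(\ul X'';\ul Y'') = \tfrac12\log\frac{\det(\dul I)\det(\dul I)}{\det(\dul\Sigma'')}$, it remains to compute $\det(\dul\Sigma'')$. Since $\dul\Sigma''$ decomposes as the direct sum $\bigoplus_{i\in[d]}\crocMat{1}{\rho_i}{\rho_i}{1}$ after a permutation of coordinates (pairing each $X''_i$ with $Y''_i$), its determinant is $\prod_{i\in[d]}(1-\rho_i^2)$. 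This gives $I_{\ul X''\ul Y''} = -\tfrac12\sum_{i\in[d]}\log(1-\rho_i^2)$.

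Finally, to get the second displayed expression, I would either repeat the Gaussian mutual information formula directly for the original covariance, $I_{\ul X\ul Y} = -\tfrac12\log\frac{\det(\dul\Sigma)}{\det(\Siga)\det(\Sigb)}$, or derive it from the first expression by unwinding the transformations: the relation $\Sigab' = \dul U^\top\La^{-1}\Sigab(\Lb^\top)^{-1}\dul V$ together with $\Siga = \La\La^\top$, $\Sigb = \Lb\Lb^\top$, and unitarity of $\dul U,\dul V$ yields $\det(\dul\Sigma)/(\det(\Siga)\det(\Sigb)) = \det(\dul I - \Sigab'\Sigab'^\top) = \prod_i (1-\rho_i^2)$ (using the Schur complement identity $\det\dul\Sigma = \det(\Siga)\det(\Sigb - \Sigab^\top\Siga^{-1}\Sigab)$ and the canonical-form computation already done in the excerpt's Section~\ref{section:corr}). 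The main obstacle is being careful about the non-reversible discarding step: one must argue cleanly that removing coordinates that are independent of everything retained leaves the mutual information unchanged, rather than merely "provides no information" heuristically — the chain rule $I(\ul X';\ul Y') = I(\ul X'';\ul Y') + I(\ul X'\setminus\ul X'';\ul Y' \mid \ul X'')$ with the last term zero by independence is the precise statement needed.
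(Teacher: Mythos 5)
Your proposal is correct and follows essentially the same route as the paper's proof: argue invariance of mutual information under the transformation, evaluate $I_{\ul{X}''\ul{Y}''}$ in the canonical form as $-\tfrac{1}{2}\sum_i\log(1-\rho_i^2)$, and recover $-\tfrac{1}{2}\log\bigl(\det(\dul{\Sigma})/(\det(\Siga)\det(\Sigb))\bigr)$ via the Schur-complement determinant identity and the relation $\Sigab'=\dul{U}^\top\La^{-1}\Sigab(\Lb^\top)^{-1}\dul{V}$. Your explicit chain-rule justification for the discarding step (the dropped coordinates are jointly Gaussian and uncorrelated with, hence independent of, everything retained, so the conditional mutual information term vanishes) is a more rigorous rendering of the paper's informal ``keeps all information'' claim, but it is the same argument in substance.
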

\begin{proof}
    Our transformation keeps all information from features that exhibits correlation with the other feature list. Therefore the mutual information between feature pairs is the same for original features and transformed features.
    
    The final feature vector pairs have mutual information \(-\frac{1}{2}\log(1-\rho_i^2)\) per feature \(i\). Thus we have \(I_{XY} = -\frac{1}{2}\sum \log(1-\rho_i^2) = -\frac{1}{2}\log\det\bpth{\dul{\Sigma}''}\). It can be shown that removing the zero rows/columns in \(\dul{\Sigma}'\), which corresponds to removing rows/columns with all zeros except 1 on the diagonal in \(\dul{\Sigma}'\), does not change the determinant of the matrix. Then \(I_{XY} = \det\bpth{\dul{\Sigma}'}\). Finally we show that \(\det\bpth{\dul{\Sigma}'} = \frac{\det\bpth{\dul{\Sigma}}}{\det\bpth{\Siga}\cdot\det\bpth{\Sigb}}\).
    \begin{align*}
        \textrm{We have } \quad \det\bpth{\dul{\Sigma}'} = \det\bpth{\dul{I}-\Sigab'\Sigab'^\top} \quad \textrm{ and }
    \end{align*}
    \vspace{-0.5cm}
    \begin{align*}
        \bpth{\La\dul{U}}\bpth{\dul{I}-\Sigab'\Sigab'^\top}\bpth{\La\dul{U}}^\top &= \Siga - \Sigab\Sigb\Sigab^\top.\\
        \implies \enspace \textrm{det}^2\bpth{\La\dul{U}}\det\bpth{\dul{\Sigma}'} &= \det\bpth{\Siga - \Sigab\Sigb\Sigab^\top}
    \end{align*}
    Furthermore \(\det^2\bpth{\La\dul{U}} = \det\bpth{\La\dul{U}\dul{U}^\top\La^\top} = \det\bpth{\La\La^\top} = \det\bpth{\Siga}\). So
    \begin{align*}
        \det\bpth{\Siga}\det\bpth{\Sigb}\det\bpth{\dul{\Sigma}'} &= \det\bpth{\Sigb} \det\bpth{\Siga - \Sigab\Sigb\Sigab^\top}.
    \end{align*}
    Notice that the right-hand side is specifically the block matrix expression for the determinant of \(\dul{\Sigma}\). So \begin{align*}
        \det\bpth{\dul{\Sigma}'} &=\frac{\det\bpth{\dul{\Sigma}}}{\det\bpth{\Siga}\cdot\det\bpth{\Sigb}}\\
        &= -\frac{1}{2}\sum_{i\in[d]}\log(1-\rho_i^2).
    \end{align*}
\end{proof}

\begin{lemma}
    \label{lemma:variance}
    Let \(p_X\), \(p_Y\) and \(p_{XY}\) denote the probability density functions of \(\ul{X}\), \(\ul{Y}\) and \((\ul{X},\ul{Y})\) respectively, where \((\ul{X},\ul{Y})\) are a pair of correlated feature vectors. Let
    \begin{align*}
        \dul{\Sigma} = \crocMat{\Siga}{\Sigab}{\Sigab^\top}{\Sigb} \textrm{ and } \dul{\Sigma}'' = \crocMat{\dul{I}}{\diag(\ul{\rho})}{\diag(\ul{\rho})}{\dul{I}}
    \end{align*}
    be the covariance matrices of these features and of their transformed variants, as described above. Then
    \begin{align*}
        \Var\pth{\log\frac{p_{XY}(\ul{X},\ul{Y})}{p_X(\ul{X})p_Y(\ul{Y})}} &= \sum_{i\in[d]}\rho_i^2\\
        &= \tr\bpth{\Siga^{-1}\Sigab\Sigb^{-1}\Sigab^\top}
    \end{align*}
\end{lemma}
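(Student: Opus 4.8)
The plan is to first reduce to the canonical form and then compute a one–dimensional variance. Exactly as in the proof of Lemma~\ref{lemma:mutualInformation}, the random variable $L \define \log\frac{p_{XY}(\ul X,\ul Y)}{p_X(\ul X)p_Y(\ul Y)}$ is unchanged under the invertible maps $\Ta,\Tb$ applied to $\ul X,\ul Y$ separately (a density ratio of this type is invariant under bijective coordinate changes of each marginal), and it is also unchanged when we discard the coordinates corresponding to zero rows/columns of $\Sigab'$, since those coordinates are independent of everything else and contribute a factor of $1$ to the ratio. Hence $\Var(L)$ may be computed in the canonical setting with covariance $\dul{\Sigma}'' = \bigoplus_{i\in[d]}\crocMat{1}{\rho_i}{\rho_i}{1}$. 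In that setting the pairs $(\ul X''_i,\ul Y''_i)$ are independent across $i$, so $L = \sum_{i\in[d]} L_i$ with $L_i \define \log\frac{p_{X_iY_i}(\ul X''_i,\ul Y''_i)}{p_{X_i}(\ul X''_i)p_{Y_i}(\ul Y''_i)}$ independent, and therefore $\Var(L) = \sum_{i\in[d]}\Var(L_i)$.

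It then remains to show $\Var(L_i) = \rho_i^2$ for a single bivariate Gaussian pair $(X,Y)$ with unit variances and correlation $\rho = \rho_i$. Writing out the Gaussian densities, one gets
\begin{align*}
  \log\frac{p_{XY}(x,y)}{p_X(x)p_Y(y)} = -\tfrac12\log(1-\rho^2) + \frac{\rho}{2(1-\rho^2)}\bpth{2xy - \rho(x^2+y^2)},
\end{align*}
so the deterministic term drops out of the variance and $\Var(L_i) = \frac{\rho^2}{4(1-\rho^2)^2}\,\Var(W)$ with $W \define 2XY - \rho X^2 - \rho Y^2$. Using the Gaussian (Isserlis/Wick) moment identities $\E[X^4]=\E[Y^4]=3$, $\E[X^2Y^2]=1+2\rho^2$, $\E[X^3Y]=\E[XY^3]=3\rho$, and $\E[XY]=\rho$, one checks $\E[W]=0$ and expands $\E[W^2]=4(1-\rho^2)^2$, whence $\Var(W) = 4(1-\rho^2)^2$ and $\Var(L_i)=\rho^2$. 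Summing over $i$ gives $\Var(L) = \sum_{i\in[d]}\rho_i^2$. I expect this fourth–moment bookkeeping to be the most error–prone step, though it is elementary; organizing it as $W^2 = (2XY)^2 + \rho^2(X^2+Y^2)^2 - 4\rho XY(X^2+Y^2)$ keeps it manageable.

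Finally, to identify $\sum_i\rho_i^2$ with $\tr\bpth{\Siga^{-1}\Sigab\Sigb^{-1}\Sigab^\top}$, note that $\sum_i \rho_i^2 = \|\Sigab''\|_F^2 = \tr\bpth{\Sigab''(\Sigab'')^\top}$, and since $\Sigab''$ is obtained from the (rectangular–diagonal) matrix $\Sigab'$ by deleting zero rows and columns, $\|\Sigab''\|_F^2 = \|\Sigab'\|_F^2 = \tr\bpth{\Sigab'(\Sigab')^\top}$. From $\dul U\Sigab'\dul V^\top = \La^{-1}\Sigab(\Lb^\top)^{-1}$ we have $\Sigab' = \dul U^\top\La^{-1}\Sigab(\Lb^\top)^{-1}\dul V$, so using $\dul V\dul V^\top = \dul I$, $(\Lb^\top)^{-1}(\Lb^{-1})^\top = \Sigb^{-1}$, cyclicity of the trace, $\dul U\dul U^\top = \dul I$, and $(\La^\top)^{-1}\La^{-1} = \Siga^{-1}$, one obtains $\tr\bpth{\Sigab'(\Sigab')^\top} = \tr\bpth{\La^{-1}\Sigab\Sigb^{-1}\Sigab^\top(\La^\top)^{-1}} = \tr\bpth{\Siga^{-1}\Sigab\Sigb^{-1}\Sigab^\top}$, which completes the proof. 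The only subtlety here is tracking transposes carefully through the SVD and Cholesky factors.
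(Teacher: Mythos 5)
Your proposal is correct and follows essentially the same route as the paper: reduce to the canonical covariance via the invariance of the log-likelihood ratio, compute the per-coordinate variance with Isserlis' theorem (the paper expands the square of the sum and kills the cross terms via $\E{Z_i}=0$ rather than invoking independence directly, but this is the same computation), and then obtain the trace identity from $\sum_i\rho_i^2=\tr\bpth{\Sigab'\Sigab'^\top}$ using the SVD and Cholesky factors. The only blemish is a transpose slip in your final step --- the factor that appears in the middle of $\Sigab'(\Sigab')^\top$ is $(\Lb^\top)^{-1}\Lb^{-1}=\Sigb^{-1}$, not $(\Lb^\top)^{-1}(\Lb^{-1})^\top$ --- which does not affect the conclusion.
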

\begin{proof}
    Notice that the log likelihood ratio of the transformed features are the same as the log likelihood ratio for the original features. Thus we just assume that the original features vectors already have the covariance matrix of the specified form. We already know the mean of the log likelihood ratio, which is equal to the mutual information \(I_{XY}\). Then
    \begin{align*}
        \frac{p_{XY}(\ul{X},\ul{Y})}{p_X(\ul{X})p_Y(\ul{Y})} - I_{XY} &= \sum_{i\in[d]}-\frac{\rho_i^2(X_i^2+Y_i^2)-2\rho_i^2X_iY_i}{2(1-\rho_i^2)}
    \end{align*}
    Let \(Z_i \define \rho_i^2\pth{X_i^2+Y_i^2}-2\rho_iX_iY_i\) for any \(i\in[d]\).
    Then
    \begin{align*}
        \Var&\pth{\frac{p_{XY}(\ul{X},\ul{Y})}{p_X(\ul{X})p_Y(\ul{Y})}} = \E{\frac{Z_i^2}{4(1-\rho_i^2)^2}}\\
        &= \sum_{i\in[d]} \frac{\E{Z_i^2}}{4\pth{1-\rho_i^2}^2} + \sum_{\{i,j\}\in\binom{[d]}{2}} \frac{2\E{Z_i}\E{Z_j}}{4\pth{1-\rho_i^2}\pth{1-\rho_j^2}}
    \end{align*}
    Observe that, for any \(i\in[d]\)
    \begin{align*}
        \E{Z_i} = \rho_i^2\E{X_i^2}+\rho_i^2\E{Y_i^2} - 2\rho_i\E{X_iY_i} = 0.
    \end{align*}
    Furthermore
    \begin{align*}
        \E{Z_i^2} =& \rho_i^2\pth{\E{X_i^4}+\E{Y_i^4}}\\
        &- 4\rho_i^3\pth{\E{X_i^3Y_i}+\E{X_iY_i^3}}\\
        &+ \pth{2\rho_i^2+4\rho_i^2}\E{X_i^2Y_i^2}
    \end{align*}
    Note that \((X_i,Y_i)\) has the same distribution as \((Y_i,X_i)\). Then the expression for \(\E{Z_i^2}\) simplifies and we get
    \begin{align*}
        \E{Z_i^2} =&\,\, 2\rho_i^2\E{X_i^4} - 8\rho_i^3\E{X_i^3Y_i}\\
        &+ \pth{2\rho_i^4+4\rho_i^2}\E{X_i^2Y_i^2}.\\
        \intertext{By Isserli's theorem this gives us}
        \E{Z_i^2} =&\,\, 6\rho_i^4\mathbb{E}^2\croc{X_i^2}-24\rho^3\E{X_i^2}\E{X_iY_i}\\
        &+\pth{2\rho_i^4+4\rho_i^2}\cdot\pth{\E{X_i^2}\E{Y_i^2}+2\mathbb{E}^2\croc{X_iY_i}}\\
        &= 4\rho_i^2 - 8\rho_i^4 + 4\rho_i^6 = 4\rho_i^2\pth{1-\rho_i^2}^2.
    \end{align*}
    Then \(\frac{\E{Z_i^2}}{4\pth{1-\rho_i^2}^2} = \rho_i^2\) and \(\Var\pth{\log \frac{q(\alpha,\beta)}{q(\alpha)q(\beta)}} = \sum \rho_i^2\).
    
    Finally we derive the expression for this variance based on the original covariance matrix. We refer to matrices \(\Sigab''\), \(\Sigab'\), \(\La\), \(\dul{U}\) as given in the description of the feature transformation.
    \begin{align*}
        \sum_{i\in[d]} \rho_i^2 &= \tr\bpth{\Sigab''\Sigab''^\top}
        = \tr\bpth{\Sigab'\Sigab'^\top}\\
        &= \tr\Bpth{\bpth{\dul{U}^\top\La^{-1}\Sigab\bpth{\Lb^{-1}}^\top V}\bpth{\dul{V}^\top\Lb^{-1}\Sigab^\top\bpth{\La^{-1}}^\top U}}\\
        &= \tr\Bpth{\bpth{\La^{-1}}^\top\La^{-1}\Sigab\bpth{\Lb^{-1}}^\top\Lb^{-1}\Sigab^\top}\\
        &= \tr\bpth{\Siga^{-1}\Sigab\Sigb^{-1}\Sigab^\top}
    \end{align*}
\end{proof}

\section{Additional lemmas}

\begin{lemma}
    \label{lemma:elementary}
    Given any \(k\in\natS\), \(\mu\in\realS_+\) \(\delta\in[0,1]\) and \(z_1,z_2\cdots,z_{2k}\in[1-\delta,1+\delta]\) a sequence with mean \(\mu\), we have
    \begin{align*}
        \prod_{i\in[2s]} z_i \,\, \geq \pth{\mu-\delta}^k\pth{\mu+\delta}^k.
    \end{align*}
\end{lemma}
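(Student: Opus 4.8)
The plan is to read the left-hand side as the value of an optimization problem: minimize $\prod_{i} z_i$ over all sequences $z_1,\dots,z_{2k}\in[1-\delta,1+\delta]$ with prescribed sum $\sum_i z_i=2k\mu$, and show this minimum is at least $\pth{\mu-\delta}^k\pth{\mu+\delta}^k$. Since $z\mapsto\log z$ is concave on $(0,\infty)$, we are minimizing the concave function $\sum_i\log z_i$ over a polytope, so the minimum is attained at an extreme point; equivalently, a minimizing sequence may be taken to have at most one coordinate strictly inside $(1-\delta,1+\delta)$, all others equal to $1-\delta$ or $1+\delta$.

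First I would justify this reduction directly by a ``spreading'' exchange, to avoid appealing to polytope theory. If two coordinates $z_i\le z_j$ are both strictly interior, replace them by $z_i-\eta,z_j+\eta$ with $\eta>0$ chosen as large as possible while keeping both in $[1-\delta,1+\delta]$, so one of them reaches an endpoint. The sum, hence the mean, is unchanged, feasibility is preserved, and $\pth{z_i-\eta}\pth{z_j+\eta}=z_iz_j+\eta\pth{z_i-z_j-\eta}<z_iz_j$, so the product strictly decreases. Each step reduces the number of strictly interior coordinates, so after finitely many steps we reach an extremal sequence whose product is at most that of the original.

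Then I would evaluate the bound on extremal sequences. For $\mu=1$ -- which is exactly the instance used in Lemma~\ref{lemma:bound4determinant}, since dividing the $\ell$ eigenvalues of $\dul{L}^\ell(s,t)$ by $s$ gives values in $[1-\tfrac{|t|}{s},\,1+\tfrac{|t|}{s}]$ whose doubled list has mean $1$ -- the constraint $\sum_i z_i=2k$ together with the at-most-one-interior property forces exactly $k$ coordinates equal to $1+\delta$ and $k$ equal to $1-\delta$ (a lone interior coordinate is impossible, as then the counts $a,b$ of the two endpoints satisfy $a+b=2k-1$ odd while the sum constraint forces $a=b$), so the product is exactly $\pth{1-\delta}^k\pth{1+\delta}^k=\pth{\mu-\delta}^k\pth{\mu+\delta}^k$. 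For general $\mu$ an extremal sequence has $a$ coordinates at $1+\delta$, $b$ at $1-\delta$, and one coordinate $c=2k\mu-a(1+\delta)-b(1-\delta)$, and it remains to check $\pth{1+\delta}^a\pth{1-\delta}^bc\ge\pth{\mu-\delta}^k\pth{\mu+\delta}^k$; the case $k=1$ is immediate, since with $z_1=\mu+d$, $z_2=\mu-d$ feasibility gives $|d|\le\delta$ and hence $z_1z_2=\mu^2-d^2\ge\mu^2-\delta^2$.

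The main obstacle is this last step for general $\mu$: turning the $k=1$ inequality into an induction on $k$ without the peel-off losing too much (a naive peel-off of an arbitrary pair uses the worst-case pair bound and is too lossy), or equivalently establishing the extremal-configuration inequality above by a direct calculus/convexity argument. For the uses of the lemma in this paper only $\mu=1$ is required, and there the argument closes immediately as above.
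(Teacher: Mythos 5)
Your proposal is essentially the paper's own argument: an exchange step that pushes interior coordinates to the endpoints of the interval while preserving the sum and strictly decreasing the product, followed by a parity count of the endpoint multiplicities. The ``main obstacle'' you flag for general \(\mu\) is an artifact of a typo in the statement: the interval \([1-\delta,1+\delta]\) should read \([\mu-\delta,\mu+\delta]\), which is what the paper's own proof assumes from its first line and what the application in Lemma~\ref{lemma:bound4determinant} requires (the doubled eigenvalue list lies in \([s-t,s+t]\) and has mean \(s\)). With that interval your argument closes for every \(\mu\) exactly as in your \(\mu=1\) case: an extremal sequence has at most one coordinate interior to \((\mu-\delta,\mu+\delta)\), and if \(a\) coordinates sit at \(\mu+\delta\), \(b\) at \(\mu-\delta\) and one at \(c\), the sum constraint gives \(c=\mu+(b-a)\delta\), so \(c\) interior would force \(a=b\) while \(a+b=2k-1\) is odd; hence there is no interior coordinate, \(a=b=k\), and the product equals \(\pth{\mu-\delta}^k\pth{\mu+\delta}^k\). (Your rescaling by \(s\), which reduces the application to \(\mu=1\), is an equally valid way to close it.) Incidentally, your symmetric spreading step is more robust than the paper's: the paper always sends \(z_i\) to the lower endpoint \(\mu-\delta\), which can push \(z_j\) above \(\mu+\delta\) when both interior entries exceed \(\mu\), whereas stopping at whichever endpoint is reached first preserves feasibility in all cases.
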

\begin{proof}
    We give an algorithmic proof.
    
    Consider a sequence \(z_1,z_2,\cdots,z_{2k}\) with mean \(\mu\) and all entries in the range \([\mu-\delta,\mu+\delta]\). Assume the product \(\pi\) of the sequence is not \(\pth{\mu-\delta}^k\pth{\mu+\delta}^k\). Then there exists two entries \(z_i,z_j\in(\mu-\delta,\mu+\delta)\). Without loss of generality assume \(|z_i-\mu|\geq|z_j-\mu|\) and \(z_i\leq z_j\). Modify the sequence by replacing \(z_i\) with \(\mu-\delta\) and \(z_j\) with \(z_j+z_i-\mu+\delta\). The new sequence still has mean \(\mu\) and entries within the same range. However observe that its product \(\frac{(\mu-\delta)(z_j+z_i-\mu+\delta)}{z_iz_j}\times \pi\) is strictly smaller.
    
    Iteratively applying this modification on any initial sequence eventually (in at most 2k-1 modifications) turns the sequence into one with product \(\pth{\mu-\delta}^k\pth{\mu+\delta}^k\), which cannot be larger than the product of any intermediary sequence or the initial sequence.
\end{proof}

\bibliographystyle{ieeetr}
\bibliography{bibtex.bib,deanon}

\end{document}